\title{Asymmetric compressive learning guarantees\\with applications to quantized sketches}
\author{Vincent Schellekens\IEEEauthorrefmark{1}\thanks{\IEEEauthorrefmark{1} E-mail: {\em \{vincent.schellekens, ~laurent.jacques\}@uclouvain.be}. ISPGroup, INMA/ICTEAM, UCLouvain, Louvain-la-Neuve, Belgium. VS and LJ are funded by Belgian National Science Foundation
		(F.R.S.-FNRS).} and Laurent Jacques\IEEEauthorrefmark{1}}
\newcommand{\Rbb}{\mathbb{R}}
\newtheorem{theorem}{Theorem}
\newtheorem{definition}{Definition}
\newtheorem{proposition}{Proposition}
\newtheorem{corollary}{Corollary}
\newtheorem{lemma}{Lemma}
\newtheorem{assumption}{Assumption}
\newcommand{\supp}{{\rm supp}\,}
\newcommand{\sign}{{\rm sign}}
\newcommand{\ud}{\mathrm{d}}
\renewcommand{\leq}{\leqslant}
\renewcommand{\geq}{\geqslant}
\DeclareMathOperator{\iid}{iid}
\newcommand{\bb}{\mathbb}
\newcommand{\ts}{\textstyle}
\newcommand{\bs}{\boldsymbol}
\newcommand{\cl}{\mathcal}
\newcommand{\ie}{\emph{i.e.}, }
\newcommand{\eg}{\emph{e.g.}, }
\newcommand{\eqdef}{:=}
\renewcommand{\Vec}[1]{\bs{#1}} 
\newcommand{\distiid}{\sim_{i.i.d.}} 
\newcommand{\expec}[1]{\mathop{{}\mathbb{E}}_{#1}} 
\newcommand{\im}{\mathrm{i}\mkern1mu} 
\DeclarePairedDelimiterX{\norm}[1]{\lVert}{\rVert}{#1} 
\newcommand{\Integr}[4]{\int_{#1}^{#2}#3\mathrm{d}#4} 
\newcommand{\wt}{\widetilde}
\newcommand{\wh}{\widehat}
\newcommand{\ds}{\cl{X}} 
\newcommand{\sketchop}[1][]{\cl{A}_{#1}}
\newcommand{\disttrue}{\cl P_0}
\newcommand{\distemp}{\wh{\cl P}_{\ds}}
\newcommand{\loss}{\ell}
\newcommand{\risk}{\cl R}
\newcommand{\SSE}{\mathrm{SSE}}
\newcommand{\LL}{\mathrm{LL}}
\newcommand{\RFF}{\scriptscriptstyle\mathrm{RFF}}
\newcommand{\setdistributions}{\mathcal{M}}
\newcommand{\modelset}{\mathcal{G}}
\newcommand{\measuredset}{\widehat{\mathcal{G}}}
\newcommand{\soltruerisk}{\Vec{\theta}^{*}}
\newcommand{\solerm}{\wt{\Vec{\theta}}}
\newcommand{\solsketch}{\wh{\Vec{\theta}}}
\newcommand{\solquant}{\wh{\Vec{\theta}}'}
\newcommand{\LRIP}{\mathrm{LRIP}}
\newcommand{\LPD}{\mathrm{LPD}}
\newcommand{\fmod}{\mathrm{mod}} 
\newcommand{\smod}{\scriptscriptstyle\mathrm{mod}} 
\DeclareMathOperator{\tmod}{mod} 
\newcommand{\lripc}{\gamma}
\newcommand{\RemoveAlgoNumber}{\renewcommand{\fnum@algocf}{\AlCapSty{\AlCapFnt\algorithmcfname}}}
\newcommand{\RevertAlgoNumber}{\algocf@resetfnum}
\newcommand{\br}[1]{\textcolor{red}{[\textbf{BR:#1}]}}
\renewcommand{\br}[1]{}
\begin{document}

\maketitle

\begin{abstract}
The compressive learning framework reduces the computational cost of training on large-scale datasets. In a sketching phase, the data is first compressed to a lightweight sketch vector, obtained by mapping the data samples through a well-chosen feature map, and averaging those contributions. In a learning phase, the desired model parameters are then extracted from this sketch by solving an optimization problem, which also involves a feature map. When the feature map is identical during the sketching and learning phases, formal statistical guarantees (excess risk bounds) have been proven.

However, the desirable properties of the feature map are different during sketching and learning (\eg quantized outputs, and differentiability, respectively). We thus study the relaxation where this map is allowed to be different for each phase. First, we prove that the existing guarantees carry over to this asymmetric scheme, up to a controlled error term, provided some Limited Projected Distortion (LPD) property holds. We then instantiate this framework to the setting of quantized sketches, by proving that the LPD indeed holds for binary sketch contributions. Finally, we further validate the approach with numerical simulations, including a large-scale application in audio event classification.
\end{abstract}

\begin{IEEEkeywords}
	Compressive statistical learning, asymmetric embeddings, quantization, Gaussian mixture modeling.
\end{IEEEkeywords}

\section{Introduction}
\label{sec:introduction}
The availability of large-scale datasets has risen tremendously over the past few years, which empowered machine learning solutions in a staggering amount of application areas. However, training from ever larger quantities of data requires ever more computational resources (\eg memory storage and processing time). For instance, datasets that cannot fit on the memory of standard workstations are increasingly common. In order to keep up with the accelerating rate at which data is produced, the design of novel resource-efficient learning paradigms has thus grown into an important research area~\cite{al2015efficient,qiu2016survey,l2017machine}.

\emph{Compressive (statistical) learning} (CL) was proposed to learn from datasets $\ds = \{ \bs x_i \in \bb R^d \}_{i=1}^n$ of massive scale (in particular, with a large number of examples $n$, typically at least several millions) while keeping computational resources (\ie memory, training time) under control~\cite{gribonval2017compressiveStatisticalLearning,gribonval2020sketching}. To do so, CL first compresses the data as a single $m$-dimensional (possibly complex-valued) vector $\Vec{z}_{\Phi,\ds} = \frac{1}{n} \sum_{i = 1}^{n} \Phi( \Vec{x}_i )$, called \emph{sketch}, by simple averaging of a \emph{feature map} $\Phi : \bb R^d \rightarrow \bb C^m$. Crucially, computing the sketch requires only one single pass over $\ds$, which is moreover easy to distribute and parallelize thanks to the independent contributions of each $\bs x_i$~\cite{gribonval2017compressiveStatisticalLearning}. This ensures that the sketch can always be constructed in an efficient manner, even when the dataset size grows drastically.

The choice of the feature map $\Phi$ determines the range of machine learning tasks that one is able to solve from the sketch $\Vec{z}_{\Phi,\ds}$ (note that $\Phi$ should be nonlinear, otherwise the sketch cannot capture anything beside the data mean).
For example, it is possible to solve k-means~\cite{keriven2016compressive} or Gaussian mixture modeling~\cite{keriven2016GMMestimation} within this framework, when $\Phi$ is chosen to be random Fourier features (RFF)~\cite{Rahimi2008RFF}. These features are defined as the complex exponential of random projections of the data, \ie 
\begin{equation}
\label{eq:RFF}
	\ts \Phi_{\RFF}(\Vec{x}) = \frac{1}{\sqrt{m}}\exp(\im (\bs\Omega^{\top}\Vec{x} + \Vec{\xi})),
\end{equation}
where the exponential is applied component-wise, and $\bs\Omega \in \bb R^{d \times m}$ has randomly drawn columns $\Vec{\omega}_j \distiid \Lambda$ for some probability distribution $\Lambda$ (usually Gaussian).  

\textit{Remark:} In our definition of the RFF, we also added a random \emph{dither} $\Vec{\xi}$, which has uniform entries $\xi_j \distiid \cl U([0,2\pi))$. Although not strictly necessary at this point, this dithering will be useful when we consider quantization. It is also mandatory if the imaginary exponential in (\ref{eq:qRFF}) is replaced by a sine or a cosine, as initially formulated in~\cite{Rahimi2008RFF}.

After this ``sketching phase'', the ``learning phase'' of CL extracts the desired machine learning model parameters $\bs \theta \in \Theta$ from the sketch. This is achieved by solving an optimization problem of the form
\begin{equation*}
\ts \min_{\Vec{\theta} \in \Theta} \cl C_{\Phi}(\Vec{\theta} ; \Vec{z}_{\Phi,\cl X} ).
\end{equation*}
Since the cost $\cl C_{\Phi}$ involves only the sketch of size $m \ll nd$ (\ie much smaller than the volume of $\ds$), this procedure is typically much more efficient from a computational point of view than the classical approach of learning directly from the entire dataset $\ds$, especially for large $n$.

Intuitively, given a map $\Phi$ (\eg $\Phi_{\RFF}$), the cost $\cl C_{\Phi}(\Vec{\theta} ; \Vec{z})$ captures the mismatch between the vector $\Vec{z}$ and another sketch, obtained using $\Phi$, associated with the candidate model $\Vec{\theta}$. Postponing the technical details for the moment, to solve k-means for instance (where one seeks a set of $K$ centroids $\Vec{\theta} = \{ \Vec{c}_k \}_{k=1}^K \subset \bb R^d$ that best cluster the data), this cost is given by $\cl C_{\Phi}(\Vec{\theta}; \Vec{z}) = \| \Vec{z} - \frac{1}{K} \sum_k \Phi(\Vec{c}_k) \|_2 $, \ie the Euclidean distance between the sketch of the data and the ``sketch of the centroids''~\cite{keriven2016compressive}.

Numerical experiments demonstrated the power of CL, sometimes solving k-means clustering~\cite{keriven2016compressive} and Gaussian mixture modeling (GMM)~\cite{keriven2016GMMestimation} with training time and memory consumption reduced by several orders of magnitude compared to classical approaches. Moreover, formal \emph{statistical learning guarantees} (bounds on the so-called excess risk) were derived~\cite{gribonval2017compressiveStatisticalLearning}: the risk is controlled whenever a form of Lower Restricted Isometry Property (LRIP) holds, which translates the compatibility between the sketch map $\Phi$ and the target learning task. Proving this LRIP is however quite technical; \eg see~\cite{gribonval2020statistical} for the LRIP between $\Phi_{\RFF}$ and k-means and GMM. 
This makes the design of a compressive learning schemes a bit rigid, as any deviation from the beaten path breaks the strong, but tediously constructed, theoretical guarantees. The ambition of this paper is to somehow relax this constraint on a specific aspect: allowing the designer to tweak the sketching phase (\eg to further improve its computational efficiency) without having to re-prove the LRIP-based guarantees from scratch.

More precisely, we study the scenario where the feature map for the sketching phase (now noted $\Psi$) is allowed to differ from the one used for the learning phase (noted $\Phi$), \ie we consider the \emph{asymmetric} CL (ACL) strategy modeled by $\min_{\Vec{\theta}} \cl C_{\Phi}(\Vec{\theta} ; \Vec{z}_{\Psi,\cl X} )$, with $\Psi \neq \Phi$. This scheme is interesting from a practical point of view because both phases, occurring in different contexts, may require very different properties from their respective feature maps.

\begin{figure}
	\centering
	\includegraphics[width=0.96\linewidth]{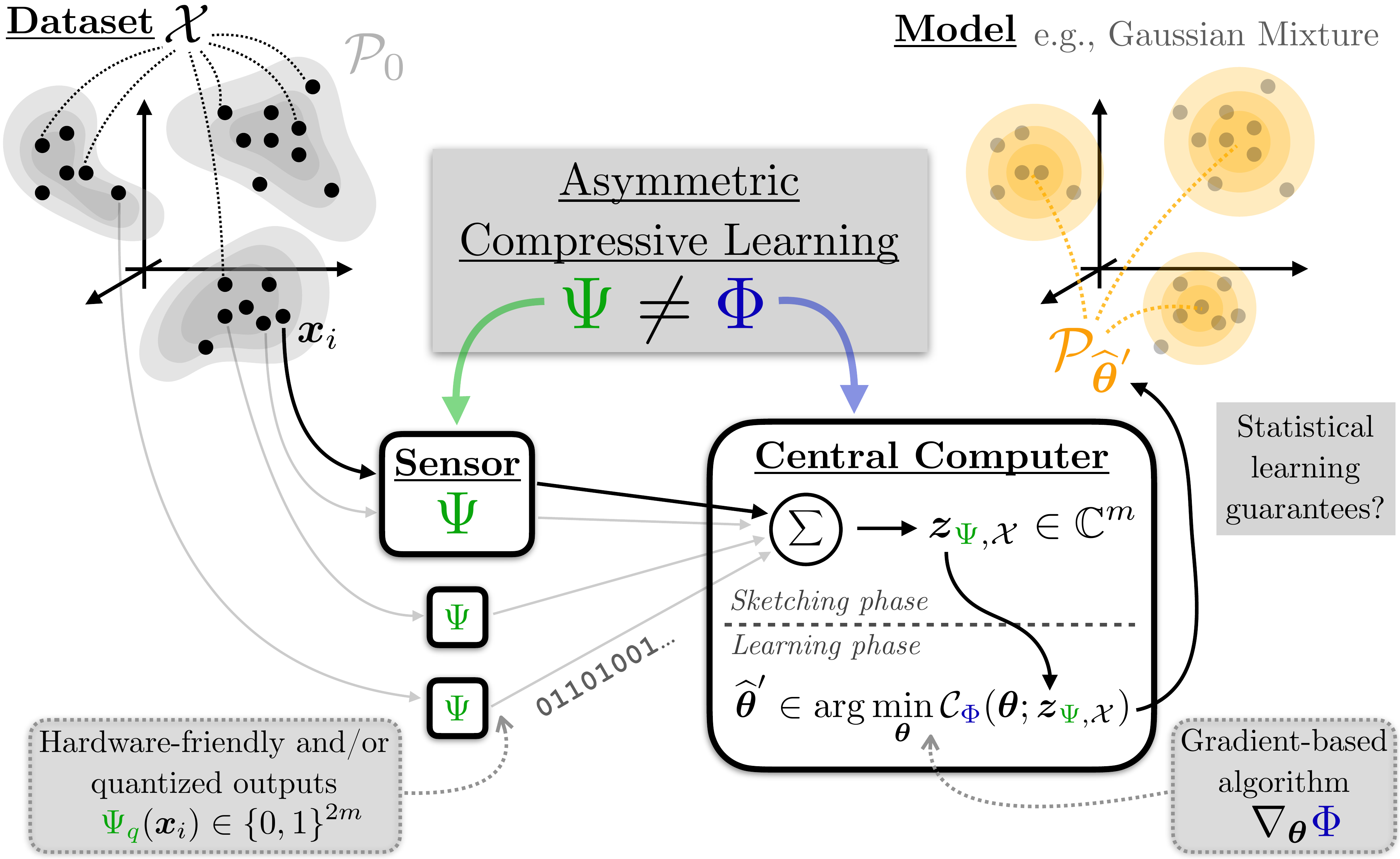}
	\caption{Our Asymmetric Compressive Learning (ACL) scheme: a dataset $\ds$ of $n$ examples $\bs x_i$ (sampled \textit{i.i.d.} from $\cl P_0$) is first compressed as a lightweight vector---the \emph{sketch}---by averaging data features $\Psi(\bs x_i)$. This operation can be performed in parallel by a sensor network, which benefits greatly from hardware-friendliness and quantization. A model $\widehat{\bs \theta}'$ is then learned from the sketch $\bs z_{\Psi,\ds}$ by solving a CL optimization procedure that uses a different, differentiable map $\Phi \neq \Psi$. Our goal is to prove statistical learning guarantees (\textit{w.r.t.} $\cl P_0$) for the model $\widehat{\bs \theta}'$.}
	\label{fig:intro}
\end{figure}

This fact is best explained by a concrete example, illustrated Fig.~\ref{fig:intro}: consider a sensor network, where each node collects a few data samples $\bs x_i$, and sends their contributions $\Psi(\bs x_i)$ to a centralized server which aggregates them to construct the sketch $\Vec{z}_{\Psi,\ds} = \frac{1}{n} \sum_{i=1}^n \Psi(\bs x_i)$. For efficient transmission of those numerous messages, \emph{quantization} of the contributions $\Psi(\bs x_i)$ is critical. Moreover, to ensure low power consumption of the sensor nodes, a compact \emph{hardware implementation} of $\Psi$ is highly desirable.

The learning phase however, being performed locally and in software, does not benefit as much from these aspects. Instead, the cost $\cl C_{\Phi}(\Vec{\theta} ; \Vec{z}_{\Psi,\cl X})$ must have \emph{meaningful minimizers} (\ie as ensured by statistical guarantees), and must be efficient to optimize; \eg gradient-based algorithms~\cite{keriven2016GMMestimation,keriven2016compressive} require \emph{differentiability} of $\Phi$. All these objectives do not necessarily align, and some are even incompatible (for example, differentiability cannot be reconciled with the discontinuity induced by quantization), hence the interest of allowing different sketching and learning maps $\Psi \neq \Phi$. 

In previous work~\cite{schellekens2018quantized}, we replaced the RFF sketch contributions $\Phi_{\RFF}(\bs x_i) \in \bb C^m$ by quantized ones $\Psi_q(\bs x_i) \in \frac{1}{\sqrt{m}} \{\pm 1 \pm \im \}^m$, obtained by taking the sign of usual RFF:
\begin{equation}
\label{eq:qRFF}
	\ts \Psi_q(\bs x) := \frac{1}{\sqrt{m}} \, \sign\left( \exp(\im (\bs\Omega^{\top} \bs x + \bs \xi )) \right),
\end{equation}
where the sign of a complex number $z \in \bb C$ is applied component-wise to its real and imaginary part, \ie $\sign(z) = \sign(\Re(z)) + \im\,\sign(\Im(z)) \in \{\pm 1 \pm \im \}$.
The embedding $\Psi_q$ is also known as (the complex extension of) the one-bit universal embedding studied in~\cite{boufounos2012universal,boufounos2017representation}. We showed numerically on the k-means task that the performance degraded only slightly, and provided an intuitive justification by showing $\cl C_{\Phi_{\RFF}}(\bs \theta; \bs z_{\Psi_q,\ds}) \simeq \cl C_{\Phi_{\RFF}}(\bs \theta; \bs z_{\Phi_{\RFF},\ds})$, for a fixed dataset $\ds$ and parameter vector $\bs \theta$. In fact, this approximation of the ``un-quantized'' costs is enabled by the dithering $\bs \xi$---a crucial ingredient of our scheme---which picks out (on average) the fundamental frequency of $q(t) = \sign(e^{\im t})$. However, this fact did not \emph{guarantee} anything regarding the excess risk achieved by ACL. This is remedied by the present paper; its main goal is to provide formal guarantees for the ACL scheme, by building upon the existing LRIP. This ensures that our result can be applied to other and future sketch constructions, provided these satisfy the LRIP.

\emph{Remark:} As will become clearer below, our results allow to replace, in the sensor implementing $\Psi$, the complex exponential $\exp(\im t)$ by a \emph{completely generic} periodic map $f(t)$ (of which $q(t) = \sign(\exp(\im t))$ defined above is a particular case). Remark that our results thus apply in particular to the case where, while the sensor can ensure the periodicity of the embedded map $f$, the precise shape of $f$ cannot be fully controlled, such that can’t really rely the sensor to accurately implement any \emph{specific} map---\eg due to imperfections and non-linearities.

\textbf{Contributions:}
After reviewing compressive learning background, we formalize the ACL problem in Sec.~\ref{sec:background}.

We first prove a general statistical guarantee for ACL in Sec.~\ref{sec:main-general}; to do so, we introduce a Limited Projected Distortion (LPD) property (capturing, roughly speaking, the similarity between $\Phi$ and $\Psi$), which we combine with the existing LRIP (which holds for $\Phi$). This result is ``general'' in the sense that it makes no assumption on the task to solve or the maps $\Phi$ and $\Psi$ (beyond the LRIP and LPD).

We then work towards concrete applications of this generic result in Sec.~\ref{sec:main-quantized}, dedicated to proving that the LPD actually holds for specific choices of $\Phi$ and $\Psi$. To achieve this, we first introduce---at the cost of an additional (but mild) assumption---a sufficient condition for the LPD, called ``signal-level LPD'' (sLPD). Next, we prove the sLPD on one particular combination of feature maps, where the ``learning phase feature map'' are random Fourier features $\Phi = \Phi_{\RFF}$, and the ``sketching phase feature map '' are \emph{random periodic features} $\Psi = \Psi_{f}$, \ie a modification of the RFF where the complex exponential is replaced by a generic periodic function $f$.
This allows to obtain at last, as a corollary of all the results above, formal statistical learning guarantees for the ACL scheme for the pair $(\Phi_{\RFF}, \Psi_q)$ considered in~\cite{schellekens2018quantized}, which ensures the theoretical soundness of that approach. To demonstrate the broader applicability of our results, we also apply them to the pair $(\Phi_{\RFF}, \Psi_{\smod})$, where $\Psi_{\smod}$ represents the---hardware-friendly---modulo measurement map studied (among others) in~\cite{shah2019signal}.

Moving on, we provide in Sec.~\ref{sec:experiments} further extensive empirical validations for the ACL strategy, confirming that, as our theory supports, it can be extended to other tasks (such as GMM) as well as other feature maps (such as $\Psi_{\smod}$ the modulo measurements map). We also highlight the practical advantage of our quantized compressive learning strategy on a large-scale audio classification task. Finally, we conclude in Sec.~\ref{sec:conclusion}.

\section{Background and problem statement}
\label{sec:background}

\renewcommand\arraystretch{1.2}

\begin{table}[]
  \centering
  \definecolor{tableShade}{gray}{0.95}
  \rowcolors{2}{tableShade}{white}
  \begin{tabularx}{.96\linewidth}{ll}
    \hiderowcolors 
    {\bf Concepts}&{\bf Description}\\
    \showrowcolors 
    \toprule
    $\setdistributions$&Probability measure set.\\
    $\cl P, \cl Q \in \setdistributions$&Arbitrary distributions in $\setdistributions$.\\
    $\disttrue \in \setdistributions$&Data distribution.\\
    $\cl P_{\Vec{\theta}} \in \setdistributions,\ \bs \theta \in \Theta$&Parametric distribution (see Table~\ref{tab:tasks}).\\
    $\modelset := \{ \cl P_{\Vec{\theta}} \, | \, \Vec{\theta} \in \Theta \} \subset \setdistributions$&Model set.\\
    $\measuredset \subset \setdistributions$&Empirical set (see Sec.~\ref{sec:main-general}).\\
    $\cl X = \{\bs x_i\}_{i=1}^n$&Dataset with $\bs x_i \sim_{\iid} \disttrue$.\\
    $\distemp := \frac{1}{n} \sum_{i=1}^n \delta_{\Vec{x}_i} \in \measuredset$&Empirical distribution of $\cl X$.
  \end{tabularx}
  \caption{Main concepts and distributions used in this work.}
  \label{tab:main-concepts}
\end{table}

We here review the background of compressive learning. For the sake of clarity, the main concepts and distributions introduced below are summarized in Table~\ref{tab:main-concepts}.

\textbf{Notations and definitions:}
We consider learning examples living in the ambient space $\bb R^d$.
The set of probability measures over $\bb R^d$ is noted $\setdistributions$, and $\delta_{\Vec{c}} \in \setdistributions$ is the Dirac delta measure located at $\Vec{c} \in \bb R^d$. To characterize the intrinsic dimension of a compact set $\Sigma \subset \bb R^d$, we use the Kolmogrov $\nu$-entropy~\cite{kolmogorov1961}: for any radius $\nu>0$ it is given by $\cl H_{\nu}(\Sigma) := \log \cl C_{\nu}(\Sigma) < \infty$, where $\cl C_{\nu}(\Sigma)$ is the covering number of $\Sigma$ by Euclidean balls of radius $\nu$, \ie 
\begin{equation}
\label{eq:covering_number}
	\ts \cl C_{\nu}(\Sigma) := \min \{ |\cl S| : \cl S \subset \Sigma \subset \cl S + \nu \bb B_2^d \},
\end{equation}
where the cardinality of $\cl S$ is written $|\cl S|$, the $d$-dimensional unit ball \textit{w.r.t.} the $\ell_p$-norm is $\bb B_p^d$, and the Minkowski sum of two sets $\cl A$ and $\cl B$ is $\cl A + \cl B = \{a + b: a \in \cl A, b \in \cl B\}$.

When dealing with \emph{generic periodic functions} $f : \bb R \rightarrow \bb C$, we assume without loss of generality that $f$ is normalized such that it is centered and with period given by $2\pi$; it can thus be decomposed as Fourier series $\ts f(t) = \sum_{k \in \bb Z} F_k e^{\im k t}$ where $F_k := \frac{1}{2\pi} \int_{0}^{2\pi} f(t) e^{-\im k t} \mathrm{d}t$ and $F_0 = 0$. For such functions, we define the \emph{mean Lipschitz smoothness property}, introduced in~\cite{schellekens2020breaking} to characterize the ``smoothness on average'' of (possibly discontinuous) functions:

\begin{definition}
	\label{def:meanSmoothness}
	A $2\pi$-periodic function $f : \bb R \rightarrow \bb C$ is \emph{mean Lipschitz smooth} with mean Lipschitz constant $L^{\mu}_f$ if for all radii $\delta \in (0,\pi]$ the maximum deviation of $f$ in the interval $[-\delta,\delta]$ is, on average, bounded by $L^{\mu}_f \delta$, \ie
	\begin{equation}
		\label{eq:meanSmooth}
		\tfrac{1}{2\pi} \ts \int_{0}^{2\pi} \sup_{r \in [-\delta, \delta]} \{ |f(t + r) - f(t)| \} \: \mathrm{d}t \leq L^{\mu}_f \cdot \delta.
	\end{equation}
\end{definition}
The advantage of this particular ``smoothness'' criterion is that is allows to handle \emph{discontinuous functions}; \eg the maps depicted Fig.~\ref{fig:periodic_maps} are mean smooth (see App.~\ref{sec: comp-mod-q-constant}).

By abuse of notation, evaluating a scalar function $f : \bb R \rightarrow \bb C$ on a vector $\Vec{u} \in \bb R^m$ means applying this function componentwise, \ie $f(\Vec{u}) \in \bb C^m$ with $(f(\Vec{u}))_j = f(u_j)$. Similarly, ``inequalities'' between vectors $\Vec{u}, \Vec{v} \in \bb R^m$ are to be interpreted component-wise, \eg $\bs u \leq \bs v$ means $u_j \leq v_j$ for all $1 \leq j \leq m$.

\subsection{Statistical Learning}
In the \emph{statistical learning} (SL) framework~\cite{vapnik1999overview,shalev2014understanding}, one assumes that the signals of interest are generated by a \emph{data distribution} $\disttrue \in \setdistributions$. 
The goal is then to fit some machine learning model, parametrized by a vector $\bs \theta  \in \Theta$, to that distribution. More precisely,
one seeks the model parameters $\soltruerisk$ that minimize the \emph{risk} objective $\risk(\Vec{\theta};\disttrue) := \expec{\Vec{x} \sim \disttrue} \loss(\Vec{x},\Vec{\theta})$, \ie the expectation of a loss $\loss : \bb R^d \times \Theta \rightarrow \bb R$ with respect to the data distribution:
\begin{equation}
\label{eq:true_risk_minimization}
 \soltruerisk \in \arg\min_{\Vec{\theta} \in \Theta} \risk(\Vec{\theta};\disttrue) = \arg\min_{\Vec{\theta}\in \Theta} \: \ts \expec{\Vec{x} \sim \disttrue} \loss(\Vec{x},\Vec{\theta}).
\end{equation}

In practice, the true data distribution $\disttrue$ is unknown, but a dataset $\ds = \{\Vec{x}_i\}_{i=1}^n$ of $n$ samples $\Vec{x}_i \distiid \disttrue$ is available. The ``ideal'' risk minimization~\eqref{eq:true_risk_minimization} is thus replaced by \emph{empirical risk minimization} (ERM), which uses the empirical distribution $\distemp := \frac{1}{n} \sum_{i=1}^n \delta_{\Vec{x}_i} \in \setdistributions$ instead of the true data distribution:
\begin{equation}
\label{eq:ERM}
\solerm \in \arg\min_{\Vec{\theta} \in \Theta} \risk(\Vec{\theta};\distemp) = \arg\min_{\Vec{\theta} \in \Theta} \ts \sum_{\Vec{x}_i \in \ds} \loss(\Vec{x}_i,\Vec{\theta}).
\end{equation}
A regularization term can also be added to \eqref{eq:ERM}, \eg to avoid overfitting $\cl X$.

Many common machine learning task can be cast into the SL framework. In classification for instance, $\ell(\bs x; \bs \theta) \in \{0,1\}$ is the 0-1 loss function, equal to $1$ (resp. $0$) whenever the decision function associated with $\bs \theta$ classifies $\bs x$ correctly (resp. incorrectly). In this work we focus on two unsupervised tasks: k-means and Gaussian mixture modeling.

As summarized in Table~\ref{tab:tasks}, \textit{k-means clustering} seeks $K$ centroids $\bs c_k \in \bb R^d$ which minimize the sum of squared errors (SSE) over the dataset (the ``error'' is the distance between each sample $\bs x_i$ and the centroid closest to it):
\begin{equation}
\label{eq:SSE}
	\SSE(\bs \theta; \ds) := \ts \sum_{i = 1}^n \min_{1 \leq k \leq K} \|  \bs x_i - \bs c_k \|_2^2
\end{equation}

On the other hand, \textit{Gaussian mixture modeling} (GMM) seeks a weighted mixture of $K$ Gaussians $\cl N(\bs \mu_k, \bs\Gamma_k)$ (\ie weights $w_k \geq 0$ that sum to one, centers $\bs \mu_k\in \Rbb^d$, and positive definite covariance matrices $\bs \Gamma_k\in \Rbb^{d\times d}$) that \emph{maximizes} the log-likelihood (LL) of the dataset $\ds$: 
\begin{equation}
	\label{eq:loglikelihood}
	\LL(\bs \theta; \ds) := \ts \sum_{i = 1}^n \log \big( \sum_{k=1}^K w_k p_{\cl N}(\bs x_i; \bs \mu_k, \bs \Gamma_k) \big),
\end{equation}
where $p_{\cl N}(\bs x; \bs \mu, \bs \Gamma)$ is the probability density function of the Gaussian distribution $\cl N(\bs \mu, \bs \Gamma)$ evaluated at $\bs x$.

The central goal of SL is to control the \emph{excess risk} $\cl R(\solerm;\disttrue) - \cl R(\soltruerisk;\disttrue)$ (also known as generalization error for prediction tasks), in the form of \emph{statistical guarantees}: for some $\delta \in (0,1)$ and $\eta > 0$, the ERM solution $\solerm$ satisfies 
\begin{equation}
\label{eq:statguarantee}
\bb P[ \cl R(\solerm;\disttrue) - \cl R(\soltruerisk;\disttrue) \leq \eta] \geq 1 - \delta. 
\end{equation}
In words, this guarantee ensures that, with probability larger than $1-\delta$ over the sampling of $\ds$, the estimate of the ERM is not worse than the optimal solution $\soltruerisk$ (on the true data distribution $\disttrue$) by a margin smaller than $\eta$; this is also called Probably Approximately Correct (PAC) learning~\cite{shalev2014understanding}.

\subsection{Compressive Learning}
The computational resources (such as memory and time) required to solve~\eqref{eq:ERM} most often increase with $n$ (consider for example that merely evaluating of the ERM cost at any one solution $\bs\theta$ already requires a full pass over the $n$ learning examples). Compressive learning (CL)~\cite{gribonval2017compressiveStatisticalLearning} avoids this issue, since the $m$-dimensional sketch is first computed in a single distributable and parallelizable pass; subsequent learning then scales only with its much smaller size $m \ll nd$ (in fact, $m$ does not dependent on $n$ at all).

More precisely, CL theory~\cite{gribonval2017compressiveStatisticalLearning} actually introduces a general \emph{sketch operator} $\sketchop[\Phi]$, which acts on the space probability distributions $\setdistributions$. This operator ``compresses'' any input distribution $\cl P \in \setdistributions$ by computing $m$ of its \emph{generalized moments}, as defined by the associated feature map $\Phi$.
\begin{definition}[Sketch]
	Given a feature map $\Phi : \bb R^d \rightarrow \bb C^m$, the associated \emph{sketch operator} $\sketchop[\Phi] : \setdistributions \rightarrow \bb C^m$ is 
	\begin{equation}
	\label{def:sketch_op}
	\ts \sketchop[\Phi](\cl P) \eqdef \expec{\Vec{x} \sim \cl P} \Phi( \Vec{x} ) = \Integr{}{}{\Phi(\Vec{x})}{\cl P(\Vec{x})} \in \bb C^m.
	\end{equation}
	In particular, the \emph{sketch of a dataset} $\ds = \{\Vec{x}_i\}_{i=1}^n$, noted $\Vec{z}_{\Phi,\ds}$, is actually the sketch of its empirical distribution,
	\begin{equation}
	\label{def:sketch_ds}
	\ts \Vec{z}_{\Phi,\ds} \eqdef \sketchop[\Phi](\distemp) = \frac{1}{n} \sum_{i = 1}^{n} \Phi( \Vec{x}_i ) \in \bb C^m.
	\end{equation}
\end{definition}

We now provide an informal outline (not fully rigorous but sufficient for our purposes; see~\cite{gribonval2017compressiveStatisticalLearning} for details) of how to ``learn'' (find good parameters $\bs \theta$) from the sketch. One first associates to each parameter vector $\bs \theta \in \Theta$ a distribution $\cl P_{\Vec{\theta}} \in \setdistributions$ (this map $\bs \theta \mapsto \cl P_{\bs \theta}$ is not necessarily injective~\cite{sheehan2019compressive}), which respects a \emph{risk consistency} property:
\begin{equation}
\label{eq:risk-consistency}
	\risk(\bs \theta; \cl P_{\bs \theta}) \leq \risk(\bs \theta'; \cl P_{\bs \theta}),\quad \forall \bs \theta' \in \Theta.
\end{equation}

\renewcommand\arraystretch{1.75}
\begin{table}[]
	\centering
	\begin{smaller}
		\begin{tabular}{l|c|c|}
			\cline{2-3}
			& \textbf{k-means clustering}~\cite{keriven2016compressive} & \textbf{Gaussian Mixture Modeling}~\cite{keriven2016GMMestimation} \\ \hline
			\multicolumn{1}{|l|}{$\bs \theta$} & centroids $\{\bs c_k \}_{k=1}^K$ & params. $\{w_k, \bs \mu_k, \bs \Gamma_k \}_{k=1}^K$ \\ \hline
			\multicolumn{1}{|l|}{\multirow{2}{*}{$\Theta$}}
			& \multirow{2}{*}{$\bs c_k \in \bb R^d$} & $w_k \geq 0, \sum_k w_k = 1, \bs \mu_k \in \bb R^d$\\ \multicolumn{1}{|l|}{} && $ \bs \Gamma_k \in \bb R^{d \times d}, \, \bs \Gamma_k^{\top} = \bs \Gamma_k \succeq 0 $ \\ \hline
			\multicolumn{1}{|l|}{$\loss(\bs x, \bs \theta)$}  & $\min_k \| \bs x - \bs c_k \|_2^2$ & $- \log \sum_k w_k p_{\cl N}(\bs x; \bs \mu_k, \bs \Gamma_k)$ \\ \hline
			\multicolumn{1}{|l|}{$\cl P_{\bs \theta}$}  & $\sum_{k=1}^K \frac{1}{K} \delta_{\Vec{c}_k} $   & $\sum_k w_k \cl N(\bs \mu_k, \bs \Gamma_k)$  \\ \hline
			\multicolumn{1}{|l|}{$\cl C_{\Phi}(\Vec{\theta} ; \Vec{z})$}  & $ \| \Vec{z} - \frac{1}{K} \sum_k \Phi(\Vec{c}_k) \|_2 $   & $ \| \Vec{z} - \sum_k w_k \sketchop[\Phi](\cl N(\bs \mu_k, \bs \Gamma_k)) \|_2 $  \\ \hline
		\end{tabular}
	\end{smaller}
	\caption{Description of two SL tasks and their equivalent in the CL framework.
	K-means seeks the $K$ centroids $c_k$ that minimize the sum of squared errors $\SSE(\bs \theta; \ds)$; in CL the parameters are mapped to a sum of $K$ Dirac deltas. GMM seeks a weighted mixture of $K$ Gaussians $\cl N(\bs \mu_k, \bs \Sigma_k)$ that maximize the log-likelihood of the data $\LL(\bs \theta; \ds)$; in this case CL simply maps the parameters to the GMM distribution itself.}
	\label{tab:tasks}
\end{table}

Table~\ref{tab:tasks} gives examples of this map for k-means\footnote{To emphasize the connection between the SL and CL formulations of k-means, we assign in Table~\ref{tab:tasks} equal weights $\frac{1}{K}$ to all the Dirac deltas of the centroids $\delta_{\Vec{c}_k}$; however the complete formulation of CL k-means actually considers those weights as free parameters to be optimized~\cite{keriven2016compressive}.} and GMM. When the parameter vector $\bs \theta$ varies, the resulting distributions constitute a \emph{model set} $\modelset := \{ \cl P_{\Vec{\theta}} \, | \, \Vec{\theta} \in \Theta \} \subset \setdistributions$.
Learning then amounts to finding the parametrized distribution $\cl P_{\bs \theta}$ from $\modelset$ whose sketch---with respect to $\Phi$---best fits the dataset sketch $\Vec{z}_{\Phi,\ds}$, as defined by the cost $\cl C_{\Phi}$:
\begin{equation}
\label{eq:sketchmatching}
	\solsketch \in \arg\min_{\Vec{\theta} \in \Theta} \; \cl C_{\Phi}(\Vec{\theta} ; \Vec{z}_{\Phi,\cl X} ) := \| \Vec{z}_{\Phi,\cl X} - \sketchop[\Phi](\cl P_{\Vec{\theta}}) \|_2.
\end{equation}

Guarantees of the form~\eqref{eq:statguarantee} can be proven for this \emph{sketch matching principle}. The idea is to show~\eqref{eq:sketchmatching} is a surrogate approximating~\eqref{eq:ERM}. Intuitively, it is possible to solve a task from the sketch if it somehow ``encodes'' that tasks risk objective. To assess how well the risk is encoded,~\cite{gribonval2017compressiveStatisticalLearning} defines a seminorm $\|\cdot\|_{\cl R}$ to measure the difference between two distributions $\cl P, \cl Q$ with respect to the task-specific risk $\cl R$,
\begin{equation}
\label{eq:excessRiskMetric}
\|\cl P - \cl Q\|_{\risk} := \ts \sup_{\bs\theta \in \Theta} \, | \risk(\bs\theta;\cl P) - \risk(\bs\theta;\cl Q) |.
\end{equation}
Equipped with this metric, we say that the sketch operator $\sketchop[\Phi]$ ``encodes'' the risk $\risk$ if the sketch distance $\|\sketchop[\Phi](\cl P)-\sketchop[\Phi](\cl Q)\|_2$ bounds $\|\cl P - \cl Q\|_{\risk}$ for all distributions in $\modelset$; the Lower Restricted Isometry Property (LRIP) formalizes this notion (this specific LRIP generalizes its well-known equivalent from compressive sensing literature~\cite{candes2005decoding,foucart2017mathematical}).

\begin{definition}[LRIP]
	\label{def:LRIP}
	The sketch operator $\sketchop[\Phi]$ has the LRIP with constant $\lripc$ on the model set $\modelset$, noted $\LRIP(\lripc;\modelset)$, if
	\begin{equation}
	\label{eq:LRIP}
	\forall \cl P, \cl Q \in \modelset, \quad \| \cl P - \cl Q \|_{\cl R} \leq \lripc \| \sketchop[\Phi] (\cl P) - \sketchop[\Phi] (\cl Q)  \|_2.
	\end{equation} 
\end{definition}

One key theorem of CL\footnote{For the sake of presentation, Thm.~\ref{prop:risk_gribonval} is taken from~\cite[Sec.\,2.4]{gribonval2017compressiveStatisticalLearning} which presents a simplified but sub-optimal version of the ``true'' CL guarantees, established in~\cite[Sec.\,2.5]{gribonval2017compressiveStatisticalLearning}. However, the extension we prove in this paper can be carried over seamlessly to the main CL guarantees, as the improvements from~\cite[Sec.\,2.5]{gribonval2017compressiveStatisticalLearning} are independent of our developments.}, proved in~\cite{gribonval2017compressiveStatisticalLearning}, is that the LRIP implies statistical guarantees for the sketch matching~\eqref{eq:sketchmatching}.

\begin{theorem}[LRIP implies excess risk control]
\label{prop:risk_gribonval}
	Assume that $\sketchop[\Phi]$ has the $\LRIP(\lripc;\modelset)$. The excess risk of the solution $\solsketch$ to~\eqref{eq:sketchmatching} satisfies $\cl R(\solsketch;\disttrue) - \cl R(\soltruerisk;\disttrue) \leq \eta$, where
	\begin{equation}
	\label{eq:excessrisk_gribonval}
		\eta = 2 D(\cl P_0, \modelset) + 4 \lripc \| \sketchop[\Phi](\disttrue) - \sketchop[\Phi](\distemp) \|_2,
	\end{equation}	
	with $D(\cl P,\modelset)$ a ``distance'' from $\cl P$ to the model set $\modelset$,
	$$D(\cl P,\modelset) := \inf_{\cl Q \in \modelset} \{ \| \cl P - \cl Q \|_{\cl R} + 2 \lripc \|\sketchop[\Phi](\cl P) - \sketchop[\Phi](\cl Q) \|_2 \}.$$
\end{theorem}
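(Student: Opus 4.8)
The plan is to run the classical ``instance-optimal decoder'' argument that underlies Theorem~\ref{prop:risk_gribonval} in~\cite{gribonval2017compressiveStatisticalLearning}: turn the LRIP into an excess-risk bound through a disciplined chain of triangle inequalities, using risk consistency only once. Throughout, write $\cl P_{\solsketch}$ for the model distribution attached to the sketch-matching minimizer $\solsketch$ of~\eqref{eq:sketchmatching}, and abbreviate the true-versus-empirical sketching error as $\epsilon := \| \sketchop[\Phi](\disttrue) - \sketchop[\Phi](\distemp) \|_2$.

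\textbf{Step 1 (excess risk $\to$ risk seminorm).} First I would show $\risk(\solsketch;\disttrue) - \risk(\soltruerisk;\disttrue) \leq 2\,\| \disttrue - \cl P_{\solsketch} \|_{\risk}$. Inserting $\pm\risk(\solsketch;\cl P_{\solsketch})$ and using risk consistency~\eqref{eq:risk-consistency} for $\cl P_{\solsketch}$, i.e.\ $\risk(\solsketch;\cl P_{\solsketch}) \leq \risk(\soltruerisk;\cl P_{\solsketch})$, gives
\[
\risk(\solsketch;\disttrue) - \risk(\soltruerisk;\disttrue)
\leq \big(\risk(\solsketch;\disttrue) - \risk(\solsketch;\cl P_{\solsketch})\big) + \big(\risk(\soltruerisk;\cl P_{\solsketch}) - \risk(\soltruerisk;\disttrue)\big),
\]
and each parenthesis is at most $\| \disttrue - \cl P_{\solsketch} \|_{\risk}$ by the definition~\eqref{eq:excessRiskMetric} of the risk seminorm. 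This ``two-sided sandwich'' is the one genuinely non-mechanical point, and it is the only place risk consistency enters.

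\textbf{Step 2 (risk seminorm $\to$ sketch distances, via LRIP and optimality).} Fix an arbitrary competitor $\cl Q \in \modelset$. By the triangle inequality for $\|\cdot\|_{\risk}$ and then $\LRIP(\lripc;\modelset)$ applied to the pair $\cl P_{\solsketch},\cl Q \in \modelset$,
\[
\| \disttrue - \cl P_{\solsketch} \|_{\risk}
\leq \| \disttrue - \cl Q \|_{\risk} + \lripc\, \| \sketchop[\Phi](\cl Q) - \sketchop[\Phi](\cl P_{\solsketch}) \|_2 .
\]
I would then route the last sketch distance through the dataset sketch $\Vec{z}_{\Phi,\ds} = \sketchop[\Phi](\distemp)$: optimality of $\solsketch$ in~\eqref{eq:sketchmatching} gives $\| \Vec{z}_{\Phi,\ds} - \sketchop[\Phi](\cl P_{\solsketch}) \|_2 \leq \| \Vec{z}_{\Phi,\ds} - \sketchop[\Phi](\cl Q) \|_2$, hence $\| \sketchop[\Phi](\cl Q) - \sketchop[\Phi](\cl P_{\solsketch}) \|_2 \leq 2\| \Vec{z}_{\Phi,\ds} - \sketchop[\Phi](\cl Q) \|_2$, and a further triangle inequality through $\sketchop[\Phi](\disttrue)$ gives $\| \Vec{z}_{\Phi,\ds} - \sketchop[\Phi](\cl Q) \|_2 \leq \| \sketchop[\Phi](\disttrue) - \sketchop[\Phi](\cl Q) \|_2 + \epsilon$. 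Collecting the terms,
\[
\| \disttrue - \cl P_{\solsketch} \|_{\risk}
\leq \big( \| \disttrue - \cl Q \|_{\risk} + 2\lripc\, \| \sketchop[\Phi](\disttrue) - \sketchop[\Phi](\cl Q) \|_2 \big) + 2\lripc\,\epsilon .
\]

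\textbf{Step 3 (conclude).} The bracketed term is exactly what is minimized over $\cl Q \in \modelset$ in the definition of $D(\disttrue,\modelset)$; since $\cl Q$ was arbitrary, taking the infimum yields $\| \disttrue - \cl P_{\solsketch} \|_{\risk} \leq D(\disttrue,\modelset) + 2\lripc\,\epsilon$. Feeding this into Step~1 gives $\risk(\solsketch;\disttrue) - \risk(\soltruerisk;\disttrue) \leq 2D(\disttrue,\modelset) + 4\lripc\,\epsilon = \eta$, as claimed. There is no deep obstacle: the proof is essentially bookkeeping of triangle inequalities around two structural inputs (the LRIP on $\modelset$ and the optimality of $\solsketch$), with risk consistency used once. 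The only thing needing care is the \emph{order} of the insertions — always passing through a model-set element before invoking the LRIP, and through $\sketchop[\Phi](\disttrue)$ to expose $\epsilon$ — so that the LRIP is never applied to $\disttrue$ or $\distemp$, which need not lie in $\modelset$.
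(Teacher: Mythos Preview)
Your proposal is correct and follows essentially the same route as the paper. The paper does not prove Theorem~\ref{prop:risk_gribonval} directly (it is quoted from~\cite{gribonval2017compressiveStatisticalLearning}), but its proof of Proposition~\ref{prop:risk_thiswork} specializes, for $\epsilon=0$, to exactly your argument: triangle inequality in $\|\cdot\|_{\risk}$ through an arbitrary $\cl Q\in\modelset$, LRIP on the pair $(\cl P_{\solsketch},\cl Q)$, routing the sketch distance through $\Vec{z}_{\Phi,\ds}$ to invoke optimality, then through $\sketchop[\Phi](\disttrue)$ to expose the sampling error, infimum over $\cl Q$, and finally the risk-consistency sandwich to pass from $\|\disttrue-\cl P_{\solsketch}\|_{\risk}$ to the excess risk. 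The only cosmetic difference is that you apply risk consistency first (your Step~1) whereas the paper applies it last; the chain of inequalities and the constants obtained are identical.
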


The first term in~\eqref{eq:excessrisk_gribonval} is a modeling bias term, the second one captures a sampling error, which decreases with $n$. 

Theorem~\ref{prop:risk_gribonval} guarantees that the excess risk is under control (bounded by $\eta$) provided that the related LRIP holds; it remains thus to prove the latter.  This endeavor is highly specific to the considered model $\modelset$ (\ie the learning task) and feature map $\Phi$. One usually proves that the LRIP holds \emph{with high probability} $1-\delta$ on the random draw of $\Phi$, where the failure probability $\delta$ depends on the desired LRIP constant $\lripc$, the complexity of the model set $\modelset$, and the number of ``measurements'' $m$. These proofs are rather technical, see~\cite{gribonval2020statistical} for the case of compressive k-means and Gaussian mixture modeling from RFF sketches.

\emph{Remark:} While we focus on theoretical guarantees for to the sketch matching program~\eqref{eq:sketchmatching}, it is worth noting that this optimization problem is usually nonconvex. In practice, heuristics, such as compressive learning orthogonal matching pursuit (\texttt{CLOMP})~\cite{keriven2016GMMestimation,keriven2016compressive}, thus \emph{approximately} solve~\eqref{eq:sketchmatching}. Although they showed empirical success, one should keep in mind that those heuristics do not necessarily find the global solution $\solsketch$, and that there might still be a performance gap between experimental results and the theoretical statistical learning guarantees (which apply to the global solution $\solsketch$). This paper mainly focuses on theoretical guarantees, except for the numerical validation (Sec.~\ref{sec:experiments}).

\subsection{Asymmetric CL with distorted or quantized sketches}

To summarize, the typical CL scenario consists of two phases: the sketching phase where the dataset $\ds$ is harshly compressed to a sketch vector $\bs z_{\Phi,\ds}$ (by averaging some feature map $\Phi$ over the data samples), followed by the learning phase, where the desired machine learning model (parametrized by $\bs \theta$) is then extracted by solving $\solsketch \in \arg\min_{\Vec{\theta} \in \Theta} \cl C_{\Phi}(\Vec{\theta} ; \Vec{z}_{\Phi,\cl X} )$, \ie the sketch matching principle described in~\eqref{eq:sketchmatching}. The success of this scheme can thus be guaranteed by establishing the LRIP of the sketch operator $\sketchop[\Phi]$ associated to the ``reference'' feature map $\Phi : \bb R^d \rightarrow \bb C^m$ (\eg the random Fourier features, $\Phi_{\RFF}$), which must hold over the relevant task's model set~$\modelset$.

In this work, we question the possibility to extend this scheme by allowing the sketching phase to use a different---or ``distorted''---feature map $\Psi \neq \Phi$ (\eg the binarized RFF, $\Psi_q$ from~\eqref{eq:qRFF}). This amounts to studying this \emph{asymmetric compressive learning} scenario (ACL for short): given a \emph{reference feature map} $\Phi$ and a different \emph{distorted feature map} $\Psi$, and having observed the ``distorted sketch'' $\bs z_{\Psi,\cl X} = \sketchop[\Psi](\distemp) = \frac{1}{n} \sum_{i=1}^n \Psi(\bs x_i) $, we select the parameters $\solquant$ that solve the ``asymmetric sketch matching'' problem, \ie
\begin{equation}
	\label{eq:asymsketchmatching}
	\solquant \in \arg\min_{\Vec{\theta} \in \Theta} \cl C_{\Phi}(\Vec{\theta} ; \Vec{z}_{\Psi,\cl X} ) = \| \Vec{z}_{\Psi,\cl X} - \sketchop[\Phi](\cl P_{\Vec{\theta}}) \|_2.
\end{equation}
To be perfectly clear, the ``asymmetry'' here refers to the fact that \emph{only} the sketching map is distorted, since we still use the reference map $\Phi$ to learn from $\Vec{z}_{\Psi,\cl X}$, as precised by the subscript in the cost $\cl C_{\Phi}$ (the only difference with the symmetric sketch matching from~\eqref{eq:sketchmatching} is the sketch map).

This (perhaps surprising) strategy is inspired by a well-known equivalent in classical signal estimation: provided a signal follows a low-complexity model (such as a sparse or a low-rank description), one can treat its nonlinearly distorted measurements as noisy linear observations, \eg ignoring quantization, with provable reconstruction guarantees if the involved nonlinearity respects a few mild conditions~\cite{plan2016generalizedLasso}. Our work adopts a similar approach to get risk control guarantees for~\eqref{eq:asymsketchmatching}, of the same form as Thm.~\ref{prop:risk_gribonval}. This enables to quantify when (and under which conditions on $\Psi$) the asymmetric sketch matching scheme~\eqref{eq:asymsketchmatching} succeeds.

\textbf{RPF sketches:}
When the reference map $\Phi$ is the random Fourier features map $\Phi_{\RFF}$, our analysis allows us in particular to replace the complex exponential in~\eqref{eq:RFF} by any (properly normalized) $2\pi$-periodic function $f : \bb R \rightarrow \bb C$, \ie to consider as distorted feature map $\Psi$ the \emph{random periodic features} (RPF)~\cite{boufounos2017representation,schellekens2020breaking}, defined as
\begin{equation}
	\label{eq:RPF}
	\ts \Psi_f(\bs x) := \frac{1}{\sqrt{m}} \, f \left( \bs\Omega^{\top} \bs x + \bs \xi  \right).
\end{equation}
This choice is motivated by the following observation: the asymmetric cost approaches---in expectation over the uniform dither $\bs \xi \sim \cl U^m([0,2\pi))$---the symmetric cost~\cite{schellekens2018quantized}:
\begin{equation}
	\label{eq:qckm_motivation}
	\ts \expec{\bs \xi} \left[ \cl C_{\Phi_{\RFF}}(\Vec{\theta} ; \Vec{z}_{\Psi_f,\cl X} ) - \cl C_{\Phi_{\RFF}}(\Vec{\theta} ; \Vec{z}_{\Phi_{\RFF},\cl X} ) \right] = c_f, 
\end{equation}
with $c_f$ a constant shift depending only on $f$, that does not impact the optimization procedure.

Besides usual RFF (a special case of RPF), we consider two specific instances of RPF sketches, as illustrated Fig.~\ref{fig:periodic_maps}.

As mentioned in Sec.~\ref{sec:introduction}, a first interesting case is the \emph{quantized RFF} $\Psi_{q}$, defined by~\eqref{eq:qRFF}, for which 
\begin{equation}
  \label{eq:unfi-quant-def}
f(t) = q(t) := \sign(e^{\im t}) \quad \in \{\pm 1 \pm \im\},
\end{equation}
with $\sign$ acting independently on the real and imaginary component---\ie the real and imaginary components of $q$ are phase shifted ``square waves''.
This feature map is (the complex extension\footnote{To simplify the comparison we focus on \emph{complex-valued} maps ($\Psi_{q}$, $\Psi_{\smod}$), but their \emph{real-valued} counterpart ($\Re\Psi_{q}$, $\Re\Psi_{\smod}$), closer to their initial formulations in the literature, could very well also be considered.} of) the so-called ``one-bit universal quantization'' introduced in~\cite{boufounos2012universal}, and further studied in~\cite{boufounos2017representation}.

Sketching with $\Psi_{q}$ presents two advantages. First, it produces \emph{quantized} sketch contributions\footnote{The sketch itself (\ie after averaging) is not necessarily quantized.} $\Psi_q(\bs x_i) \in \{ \frac{\pm 1 \pm \im}{\sqrt{m}} \}^m$, which heavily reduces the cost of their (potential) transmission or storage (\eg $\Psi_q(\bs x_i)$ can be trivially encoded using only $2m$ bits). Second, the ``universal quantization'' operation $q(\cdot)$, which can be interpreted as the Least Significant Bit of a plain uniform scalar quantizer, is moreover amenable to \emph{plausible hardware implementations}, \eg through the use of voltage controlled oscillators~\cite{yoon2008time}. Moreover, even if $q$ is implemented in software, it is still cheaper to evaluate than $\exp(\im\cdot)$---a significant improvement since the complex exponential is the computational bottleneck in fast implementations of $\Phi_{\RFF}$~\cite{chatalic2018fastSketch}.

\begin{figure}
	\centering
	\includegraphics[width=\linewidth]{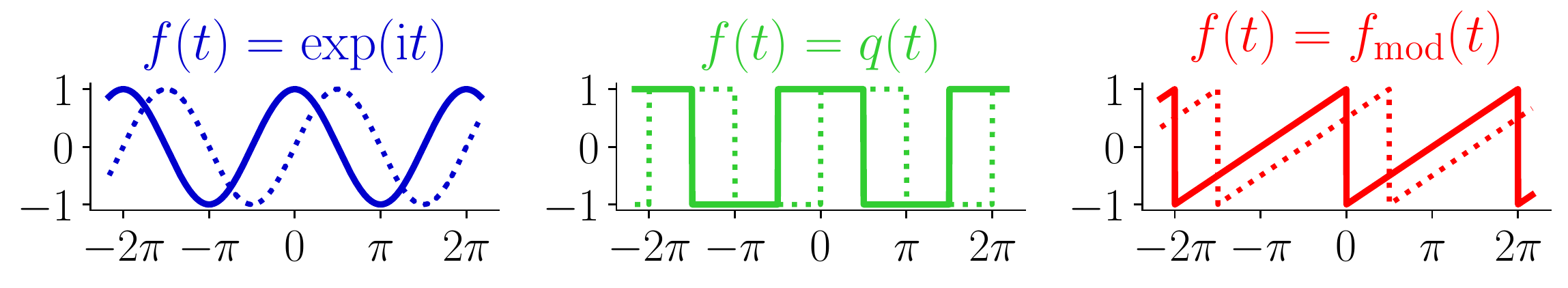}
	\caption{The $2\pi$-periodic functions $f : \bb R \rightarrow \bb C$ of particular interest ($\Re f$ in plain, $\Im f$ in dashed), related to RPF $\Psi_{f}(\bs x) = \frac{1}{\sqrt{m}} f(\bs\Omega^{\top}\bs x + \bs \xi)$. Left, the complex exponential $\exp(\im \cdot)$, related to the usual random Fourier features $\Phi_{\RFF}$; middle, the one-bit universal quantization function $q$, related to quantized RFF $\Psi_{q}$; and right, the (complex and normalized) modulo function $\fmod$, related to the modulo features $\Psi_{\smod}$.}
	\label{fig:periodic_maps}
\end{figure}

Another particular case of the RPF with promising applications is the (complex) \emph{modulo RFF} $\Psi_{\smod}$ defined as
\begin{equation}
  \label{eq:mod-f-def}
f(t) = \fmod(t) := \ts \tmod_{2\pi}(t) + \im \cdot \tmod_{2\pi}(t - \frac{\pi}{2})\ \in \bb C,
\end{equation}
where $\tmod_{T}(t) := 2\big(\frac{t}{T} - \lfloor \frac{t}{T} \rfloor\big)-1$ is the ``normalized'' modulo $T$ operation---\ie the real and imaginary components of $\fmod()$ are phase shifted ``sawtooth waves'' (Fig.~\ref{fig:periodic_maps}).
These modulo features take their roots from the recent theory of \emph{modulo sampling} of signals~\cite{bhandari2017unlimited}, and---much closer to our definition of $\Psi_{\smod}$---its extension to compressive modulo measurements of structured signals~\cite{shah2019signal}.

As explained in~\cite{bhandari2017unlimited,shah2019signal}, the advantage of this scheme is the existence of dedicated modulo sensors (\eg using self-reset analog-to-digital converters~\cite{rhee2003wide}) which again paves the way for \emph{efficient hardware computation} of the sketch contributions $\Psi_{\smod}(\bs x_i)$.

Finally, we recall that as remarked above, the RPF are also of interest if one seeks to implement $f$ in hardware sensors but can only ensure its periodicity, without accurate control of its precise shape $f$ due to imperfections.

\textbf{Previous work:}
In~\cite{schellekens2018quantized}, we demonstrated empirically, on the k-means problem, that replacing $\Phi_{\RFF}$ by $\Psi_q$ only induced a moderate decrease of the learning performances. However, the nice theoretical guarantees of CL (\ie Thm~\ref{prop:risk_gribonval}) do not hold in this setting anymore. While~\eqref{eq:qckm_motivation} is a promising start to gain intuition on why ACL could work, it does not \emph{guarantee} anything (\eg for finite values of $m$).

\section{A generic guarantee for ACL}
\label{sec:main-general}

Our goal is thus to prove guarantees in the spirit of Thm.~\ref{prop:risk_gribonval} for the ACL scheme. To derive unifying guarantees for each combination of model set $\modelset$ (task), reference map $\Phi$ and distortion $\Psi$, we build upon the existing LRIP (characterizing the compatibility between $\modelset$ and $\Phi$), which we combine with another property, the \emph{Limited Projected Distortion} (LPD) property (extending its definition from~\cite{xu2020quantized}). The LPD better characterizes the closeness between the distorted and reference sketches, $\Vec{z}_{\Psi,\cl X}$ and $\Vec{z}_{\Phi,\cl X}$, than the too restrictive Euclidean distance $\|\Vec{z}_{\Psi,\cl X}  - \Vec{z}_{\Phi,\cl X} \|_2$: for example, with the quantized RFF sketch~\cite{schellekens2018quantized}, this distance does not vanish as $m$ grows to infinity. 

The LPD relies on an additional assumption characterizing over which datasets $\ds$ it should hold.
\begin{assumption}
\label{as:empirical_set}
	There exists a set $\measuredset \subseteq \setdistributions$, coined \emph{empirical set}, such that, for any considered dataset $\ds$, the empirical distribution $\distemp$ belongs to $\measuredset$.
\end{assumption}
Note that Assumption~\ref{as:empirical_set} is quite permissive. We allow in particular to pick $\measuredset = \setdistributions$, which means it holds trivially. Moreover, even when $\measuredset \subsetneq \setdistributions$, it is quite mild; it is for example fulfilled under the natural assumption that the data samples $\bs x_i$ belong to a bounded domain $\Sigma$ (more on this later). That being said, we can now formalize the LPD.

\begin{definition}[LPD]
Given the empirical set $\measuredset \subseteq \setdistributions$, an error $\epsilon>0$, a reference sketch operator $\sketchop[\Phi]$, and a (task-dependent) model set $\modelset \subset \setdistributions$, we say that the distorted sketch operator $\sketchop[\Psi]$ satisfies the LPD of error $\epsilon$ on $\measuredset$ with respect to $\sketchop[\Phi]$ and $\modelset$, or shortly $\LPD(\epsilon;\measuredset,\modelset, \sketchop[\Phi])$, if
	\begin{equation}
	\label{eq:LPD}
	\ts \forall \cl P \in \measuredset, \cl Q \in \modelset, \quad | \langle \sketchop[\Psi](\cl P) - \sketchop[\Phi](\cl P) , \sketchop[\Phi](\cl Q) \rangle | \leq \epsilon.
	\end{equation}
\end{definition}

In words, the LPD thus ensures that, for any considered dataset $\ds$, the difference between its distorted sketch $\bs z_{\Psi,\ds}$ and its reference sketch $\bs z_{\Phi,\ds}$ is sufficiently small when projected on any possible ``reference sketch'' of the model set, \ie projected on any $\sketchop[\Phi](\cl P_{\bs \theta})$ for all parameters $\bs \theta \in \Theta$. As made clear below, $\{\sketchop[\Phi](\cl P_{\bs \theta}): \cl P_{\bs \theta} \in \modelset\}$ contains actually the ``directions'' that matter for solving~\eqref{eq:asymsketchmatching}.

Our first main result (Prop.~\ref{prop:risk_thiswork}) states that if the reference sketching operator $\sketchop[\Phi]$ satisfies the LRIP, and the distorted sketching operator $\sketchop[\Psi]$ the LPD, then the excess risk of the ACL solution~\eqref{eq:asymsketchmatching} can be controlled. To prove this, we adapt the proof of Thm~\ref{prop:risk_gribonval} found in~\cite{gribonval2017compressiveStatisticalLearning,keriven2018instanceOptimal}, and we leverage the LPD to show that distorting the sketch does not modify the cost function too much---as first shown in Lemma~\ref{lem:LRIPandLPD}.

\begin{lemma}
	\label{lem:LRIPandLPD}
	Assume $\sketchop[\Psi]$ satisfies the $\LPD(\epsilon;\measuredset,\modelset, \sketchop[\Phi])$. For any $\distemp \in \measuredset$, the asymmetric sketch matching solution $\solquant$ to~\eqref{eq:asymsketchmatching} is sub-optimal with respect to the symmetric matching solution $\solsketch$ to~\eqref{eq:sketchmatching} by at most 
	\begin{equation}
		\cl C_{\Phi}(\solquant ; \Vec{z}_{\Phi,\cl X} ) - \cl C_{\Phi}(\solsketch; \Vec{z}_{\Phi,\cl X} ) \leq 2\sqrt{\epsilon}.
	\end{equation}
	
\end{lemma}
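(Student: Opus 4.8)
The plan is to reduce the statement to a short algebraic manipulation: I will show that the squared costs $\cl C_\Phi(\cdot;\Vec z_{\Psi,\cl X})^2$ and $\cl C_\Phi(\cdot;\Vec z_{\Phi,\cl X})^2$ differ, uniformly over $\Theta$, by a $\Vec\theta$-independent constant plus an additive error controlled by the LPD, and then invoke the respective optimality of $\solquant$ and $\solsketch$.

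Concretely, abbreviate $\Vec z_\Phi := \Vec z_{\Phi,\cl X} = \sketchop[\Phi](\distemp)$, $\Vec z_\Psi := \Vec z_{\Psi,\cl X} = \sketchop[\Psi](\distemp)$ and $\Vec s_{\Vec\theta} := \sketchop[\Phi](\cl P_{\Vec\theta})$, and view $\bb C^m$ as $\bb R^{2m}$ so that $\scp{\cdot}{\cdot}$ is the associated real symmetric inner product. Expanding the squared Euclidean norms gives, for every $\Vec\theta \in \Theta$,
\[
\cl C_\Phi(\Vec\theta;\Vec z_\Psi)^2 - \cl C_\Phi(\Vec\theta;\Vec z_\Phi)^2 = \big(\|\Vec z_\Psi\|_2^2 - \|\Vec z_\Phi\|_2^2\big) - 2\,\scp{\Vec z_\Psi - \Vec z_\Phi}{\Vec s_{\Vec\theta}},
\]
where the bracket is independent of $\Vec\theta$. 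Since $\distemp \in \measuredset$ by Assumption~\ref{as:empirical_set} and $\cl P_{\Vec\theta} \in \modelset$, the $\LPD(\epsilon;\measuredset,\modelset,\sketchop[\Phi])$ applied with $\cl P = \distemp$, $\cl Q = \cl P_{\Vec\theta}$ yields $|\scp{\Vec z_\Psi - \Vec z_\Phi}{\Vec s_{\Vec\theta}}| \leq \epsilon$. Hence the two squared costs agree up to a fixed constant and an error of modulus at most $2\epsilon$, for all $\Vec\theta$.

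Next I would use optimality: as $\solquant$ minimizes $\cl C_\Phi(\cdot;\Vec z_\Psi)$ and $\solsketch$ minimizes $\cl C_\Phi(\cdot;\Vec z_\Phi)$, we have $\cl C_\Phi(\solquant;\Vec z_\Psi)^2 \leq \cl C_\Phi(\solsketch;\Vec z_\Psi)^2$. Substituting the identity above at both $\Vec\theta = \solquant$ and $\Vec\theta = \solsketch$, the $\Vec\theta$-independent constant cancels and the two LPD error terms combine to at most $4\epsilon$, giving $\cl C_\Phi(\solquant;\Vec z_\Phi)^2 \leq \cl C_\Phi(\solsketch;\Vec z_\Phi)^2 + 4\epsilon$. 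Writing $b := \cl C_\Phi(\solsketch;\Vec z_\Phi) \geq 0$ and using subadditivity of the square root, $\cl C_\Phi(\solquant;\Vec z_\Phi) \leq \sqrt{b^2 + 4\epsilon} \leq b + 2\sqrt{\epsilon}$ (the case $\cl C_\Phi(\solquant;\Vec z_\Phi) \leq b$ being trivial), which is exactly the claimed bound.

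I do not expect a genuine obstacle here — this is the standard "near-minimizer of a perturbed objective" estimate. The only two points needing a little care are (i) that the cross term in the expansion must be taken with the real symmetric inner product on $\bb C^m \cong \bb R^{2m}$, so that the LPD bound applies verbatim, and (ii) the passage from a bound on the difference of squared costs to a bound on the difference of the costs themselves, which is precisely where the $\sqrt{\epsilon}$ scaling (rather than a linear-in-$\epsilon$ term) in $2\sqrt{\epsilon}$ comes from.
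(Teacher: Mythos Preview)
Your proof is correct and is essentially the paper's argument: both expand the squared costs to isolate the $\bs\theta$-independent part $\|\Vec z_\Psi\|_2^2-\|\Vec z_\Phi\|_2^2$ and a cross term bounded by the LPD, then use optimality of $\solquant$ on the asymmetric cost (compared to $\solsketch$) to obtain $\cl C_\Phi(\solquant;\Vec z_\Phi)^2 \leq \cl C_\Phi(\solsketch;\Vec z_\Phi)^2 + 4\epsilon$ and conclude via $\sqrt{b^2+4\epsilon}\leq b+2\sqrt{\epsilon}$. Your explicit remark about taking the real inner product on $\bb C^m\cong\bb R^{2m}$ is a welcome clarification that the paper leaves implicit.
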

\begin{proof}
	For conciseness, we drop the $\ds$ subscript and denote $\bs a := \sketchop[\Phi](\cl P_{\wh{\bs \theta}})$, $\bs a' := \sketchop[\Phi](\cl P_{\wh{\bs \theta'}})$. The LPD implies both
	\begin{align*}
          \ts \|\Vec{z}_{\Psi} - \bs a\|_2^2 - \|\Vec{z}_{\Phi} - \bs a\|_2^2&\ts \leq 2 \epsilon + \|\Vec{z}_{\Psi}\|_2^2 - \|\Vec{z}_{\Phi}\|_2^2,\\
          \ts \|\Vec{z}_{\Phi} - \bs a'\|_2^2 - \|\Vec{z}_{\Psi} - \bs a'\|_2^2&\ts \leq 2 \epsilon + \|\Vec{z}_{\Phi}\|_2^2 - \|\Vec{z}_{\Psi}\|_2^2.
	\end{align*}
	By optimality of~\eqref{eq:asymsketchmatching}, $\|\Vec{z}_{\Psi} - \bs a'\|_2^2 - \|\Vec{z}_{\Psi} - \bs a\|_2^2 \leq 0$, and adding the three inequalities together gives $\|\Vec{z}_{\Phi} - \bs a'\|_2^2 \leq \|\Vec{z}_{\Phi} - \bs a\|_2^2 + 4\epsilon$; a square root completes the proof.
\end{proof}

\begin{proposition}[Asymmetric sketch matching risk control]
\label{prop:risk_thiswork}
	Assume $\sketchop[\Phi]$ satisfies the $\LRIP(\lripc;\modelset)$ and $\sketchop[\Psi]$ the $\LPD(\epsilon;\measuredset,\modelset, \sketchop[\Phi])$. The solution $\solquant$ to the asymmetric problem~\eqref{eq:asymsketchmatching} satisfies $\cl R(\solquant;\disttrue) - \cl R(\soltruerisk;\disttrue) \leq \eta'$, where
	\begin{equation}
	\label{eq:excessrisk_asym}
	\eta' = 2 D(\cl P_0, \modelset) + 4 \lripc \| \sketchop[\Phi](\disttrue) - \sketchop[\Phi](\distemp) \|_2 + 4 \lripc \sqrt{\epsilon},
	\end{equation}	
	with $D(\cl P,\modelset)$ as in Thm.~\ref{prop:risk_gribonval}.
\end{proposition}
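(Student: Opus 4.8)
The plan is to mimic, step for step, the proof of Thm.~\ref{prop:risk_gribonval} from~\cite{gribonval2017compressiveStatisticalLearning}, the only new ingredient being that at the single point where that proof invokes optimality of the \emph{symmetric} sketch-matching solution $\solsketch$, we first ``pay'' a slack of $2\sqrt{\epsilon}$ to pass from $\solquant$ to $\solsketch$ by invoking Lemma~\ref{lem:LRIPandLPD}. We start, exactly as in the symmetric case, by reducing the excess risk to a seminorm distance: combining the risk-consistency property~\eqref{eq:risk-consistency} at $\bs\theta = \solquant$ with the definition~\eqref{eq:excessRiskMetric} of $\|\cdot\|_{\risk}$ yields $\risk(\solquant;\disttrue) - \risk(\soltruerisk;\disttrue) \leq 2\,\|\disttrue - \cl P_{\solquant}\|_{\risk}$. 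This uses nothing about the sketches and is unchanged.

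Next, for an arbitrary $\cl Q \in \modelset$ I bound $\|\disttrue - \cl P_{\solquant}\|_{\risk} \leq \|\disttrue - \cl Q\|_{\risk} + \|\cl Q - \cl P_{\solquant}\|_{\risk}$; since $\cl Q, \cl P_{\solquant} \in \modelset$, I control the second term by $\LRIP(\lripc;\modelset)$, and then use the triangle inequality in $\bb C^m$, inserting $\Vec{z}_{\Phi,\ds} = \sketchop[\Phi](\distemp)$, to isolate $\cl C_{\Phi}(\solquant;\Vec{z}_{\Phi,\ds}) = \|\sketchop[\Phi](\distemp) - \sketchop[\Phi](\cl P_{\solquant})\|_2$, which is the \emph{only} place the asymmetry enters. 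Here comes the key step: Assumption~\ref{as:empirical_set} guarantees $\distemp \in \measuredset$, so Lemma~\ref{lem:LRIPandLPD} applies and gives $\cl C_{\Phi}(\solquant;\Vec{z}_{\Phi,\ds}) \leq \cl C_{\Phi}(\solsketch;\Vec{z}_{\Phi,\ds}) + 2\sqrt{\epsilon}$. Since $\solsketch$ solves the symmetric problem~\eqref{eq:sketchmatching}, it is optimal against the competitor $\bs\theta_{\cl Q}$ with $\cl P_{\bs\theta_{\cl Q}} = \cl Q$ (which exists because $\cl Q \in \modelset$), hence $\cl C_{\Phi}(\solsketch;\Vec{z}_{\Phi,\ds}) \leq \|\sketchop[\Phi](\distemp) - \sketchop[\Phi](\cl Q)\|_2$, and one more triangle inequality inserting $\sketchop[\Phi](\disttrue)$ brings in the sampling term.

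Collecting everything, the contributions regroup into $\|\disttrue - \cl Q\|_{\risk} + 2\lripc\,\|\sketchop[\Phi](\disttrue) - \sketchop[\Phi](\cl Q)\|_2$, plus a $\cl Q$-independent remainder $2\lripc\,\|\sketchop[\Phi](\disttrue) - \sketchop[\Phi](\distemp)\|_2 + 2\lripc\sqrt{\epsilon}$; taking $\inf_{\cl Q \in \modelset}$ of the first group produces exactly $D(\cl P_0,\modelset)$, and multiplying through by the factor $2$ from the first step gives the claimed $\eta'$ in~\eqref{eq:excessrisk_asym}.

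I expect no conceptual difficulty: all the substantive work has already been done in Lemma~\ref{lem:LRIPandLPD}, which exploits the LPD through a polarization-type manipulation of squared Euclidean norms. The only thing to watch is the bookkeeping — that the $2\sqrt{\epsilon}$ slack is injected at exactly the $\cl C_{\Phi}(\solquant;\cdot)$ term and nowhere else, that every distribution fed to the LRIP lies in $\modelset$ and every one fed (via Lemma~\ref{lem:LRIPandLPD}) to the LPD lies in $\measuredset$, and that the several triangle inequalities are chained so that the coefficients collapse to the clean $4\lripc$ and $4\lripc\sqrt{\epsilon}$ of~\eqref{eq:excessrisk_asym} rather than to larger constants.
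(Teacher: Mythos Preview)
Your proposal is correct and follows essentially the same route as the paper: both arguments bound $\|\cl P_{\solquant}-\cl P_0\|_{\risk}$ via the triangle inequality through an arbitrary $\cl Q\in\modelset$, apply the LRIP, insert $\Vec z_{\Phi,\ds}$, invoke Lemma~\ref{lem:LRIPandLPD} followed by the optimality of $\solsketch$, insert $\sketchop[\Phi](\disttrue)$, and take the infimum over $\cl Q$. The only cosmetic difference is ordering---you perform the risk-consistency reduction $\risk(\solquant;\disttrue)-\risk(\soltruerisk;\disttrue)\leq 2\|\disttrue-\cl P_{\solquant}\|_{\risk}$ at the outset, whereas the paper postpones it to the final line.
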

\begin{proof}
We use the same notations as in the previous proof. For some arbitrary $\cl Q \in \modelset$, since $\|\cdot\|_{\risk}$ is a seminorm~\cite{gribonval2017compressiveStatisticalLearning},
\begin{equation*}
\| \cl P_{\solquant} - \cl P_0 \|_{\cl R} \leq  \| \cl P_{\solquant} - \cl Q \|_{\cl R} + \| \cl Q - \cl P_0 \|_{\cl R}.
\end{equation*}
Since $\cl P_{\solquant}, \cl Q \in \modelset$, the LRIP and the triangle inequality give
\begin{equation*}
\begin{split}
	\| \cl P_{\solquant} - \cl Q \|_{\cl R} &\leq \lripc \| \sketchop[\Phi](\cl P_{\solquant}) - \sketchop[\Phi]( \cl Q ) \|_2\\
	&\leq \lripc \| \bs a' - \Vec{z}_{\Phi} \|_2 + \lripc \| \Vec{z}_{\Phi} - \sketchop[\Phi]( \cl Q ) \|_2.
\end{split}
\end{equation*}
Using Lemma~\ref{lem:LRIPandLPD} then the optimality of~\eqref{eq:sketchmatching}, we get
\begin{equation*}
\begin{split}
	\ts \frac{\| \cl P_{\solquant} - \cl Q \|_{\cl R}}{\lripc} &\leq  2\sqrt{\epsilon} + \| \bs a - \Vec{z}_{\Phi} \|_2 + \| \Vec{z}_{\Phi} - \sketchop[\Phi]( \cl Q ) \|_2  \\
	&\leq 2\sqrt{\epsilon} +  2 \| \Vec{z}_{\Phi} - \sketchop[\Phi]( \cl Q ) \|_2.
\end{split}
\end{equation*}
We develop the second term with the triangle inequality,
$$\| \Vec{z}_{\Phi} - \sketchop[\Phi]( \cl Q ) \|_2 \leq \| \Vec{z}_{\Phi} - \sketchop[\Phi]( \cl P_0 ) \|_2 + \| \sketchop[\Phi]( \cl P_0 ) - \sketchop[\Phi]( \cl Q ) \|_2.$$
Gathering the results, and taking the infimum with respect to $\cl Q \in \modelset$, we obtain that $\| \cl P_{\solquant} - \cl P_0 \|_{\cl R} \leq \frac{\eta'}{2}$, with
\begin{equation*}
\ts \frac{\eta'}{2} = D(\disttrue, \modelset) + 2 \lripc \| \Vec{z}_{\Phi} - \sketchop[\Phi]( \disttrue ) \|_2 + 2\lripc\sqrt{\epsilon}.
\end{equation*}
Finally, we combine this with the risk metric definition~\eqref{eq:excessRiskMetric}, and since the map $\bs \theta \mapsto \cl P_{\bs \theta}$ satisfies the risk consistency property~\eqref{eq:risk-consistency}, we get
\begin{align*}
  &\cl R(\solquant;\disttrue) - \cl R(\soltruerisk;\disttrue)\\
  &= \; \cl R(\solquant;\disttrue) - \cl R(\solquant;\cl P_{\solquant}) + \cl R(\solquant;\cl P_{\solquant}) - \cl R(\soltruerisk;\cl P_{\solquant})\\
&\qquad + \cl R(\soltruerisk;\cl P_{\solquant}) - \cl R(\soltruerisk;\disttrue)\\
&\leq \ts \frac{\eta'}{2} + 0 + \frac{\eta'}{2} = \eta'.
\end{align*}
\end{proof}

For comparison, the excess risk guarantee of Prop.~\ref{prop:risk_thiswork} for the asymmetric sketch matching solution~\eqref{eq:asymsketchmatching} is thus exactly the same as the guarantee in Thm.~\ref{prop:risk_gribonval} for the symmetric solution~\eqref{eq:sketchmatching}, up to the additive term $4\sqrt{\epsilon}$ in the excess risk bound (\ie $\eta' = \eta + 4\sqrt{\epsilon}$), which expresses the ``mismatch'' between $\Psi$ and $\Phi$. Whenever the LPD holds with a reasonably small error $\epsilon$, we can thus expect that the asymmetric scheme (learning from the distorted sketch $\bs z_{\Psi,\ds}$) will perform almost as well as the symmetric one (learning from the reference sketch $\bs z_{\Phi,\ds}$).

Of course, it remains to show that the LPD actually holds in practice to complete this guarantee, which we tackle next.

\section{Proving the LPD for quantized CL}
\label{sec:main-quantized}

In this section, we first provide one possible strategy to prove the LPD. Under an additional assumption (which we first introduce and show to be met in practice for k-means and GMM), it is sufficient to prove a somewhat simpler ``signal-level'' version of the LPD instead (which we call signal-LPD, or \emph{sLPD} for short). We then apply this strategy to the specific case where $\Phi = \Phi_{\RFF}$ are the random Fourier features and $\Psi = \Psi_{f}$ are generic random periodic features~\eqref{eq:RPF}, \ie the same RFF but where $t \mapsto \exp(\im t)$ is replaced by a generic periodic function $f(t)$. This finally allows us to obtain, as a particular case, statistical learning guarantees for the quantized CL scheme introduced in~\cite{schellekens2018quantized}, \ie where the distorted map is the binarized RFF $\Psi_{q}$~\eqref{eq:qRFF}. To demonstrate the generic nature of our result, we also apply it to the modulo features $\Psi_{\smod}$.

\subsection{Assumption on the data domain}
\label{sec:assumpt-data-doma}

In practical machine learning applications, the data vectors $\bs x_i$ do not take \emph{any} possible value in $\bb R^d$---one can usually assume \emph{a priori} that they belong to a compact set $\Sigma \subset \bb R$. We extend this assumption to the \emph{probability distributions} involved in the compressive learning problem~\eqref{eq:asymsketchmatching}, \ie to the empirical distributions $\distemp \in \measuredset$ (see Assumption~\ref{as:empirical_set}) and the parametric distributions $\cl P_{\bs \theta} \in \modelset$.

\begin{assumption}
\label{as:domain}
	For a compact set $\Sigma \subset \Rbb^d$, the model set $\modelset$ and the empirical set $\measuredset$ are subsets of $\setdistributions_{\Sigma,\zeta} \subset \setdistributions$, the set of probability measures that are \emph{mostly} supported on $\Sigma$, \ie for some $0 \leq \zeta < 1$, $\modelset, \measuredset \subset \setdistributions_{\Sigma,\zeta}$ with
	\begin{equation}
		\label{eq:modelSet}
		\ts \setdistributions_{\Sigma,\zeta} := \{ \cl P \in \setdistributions : \cl P(\Sigma) = \Integr{\Sigma}{}{}{\cl P} \geq 1- \zeta \}.
	\end{equation}
\end{assumption}

In particular, for $\zeta = 0$ all $\cl P \in \setdistributions_{\Sigma,0}$ are ``almost surely'' supported on $\Sigma$, \ie $\supp(\cl P) \subset \Sigma$.

Assumption~\ref{as:domain} holds in practice. Consider the common case where there are known lower and upper bounds $\bs l, \bs u \in \bb R^d$ for the values that the learning samples $\Vec{x}_i$ can take (\eg due to physical constraints). This means that all the learning examples lie in a ``box'' $\Sigma_{\bs l, \bs u} \subset \bb R^d$:
$$\bs x_i \in \Sigma_{\bs l, \bs u} := \{\bs x \in \Rbb^d : \bs l \leq \bs x \leq \bs u\}.$$
Since all the examples of any considered dataset $\ds$ necessarily lie in that box, this directly implies that all the related empirical distributions $\distemp \in \measuredset$ satisfy $\distemp \in \setdistributions_{\Sigma_{\bs l, \bs u},0}$. 

The inclusion of the model set $\modelset$ can be reached from additional constraints on $\Theta$, imposed during the optimization procedure. In the simplest case, the \emph{k-means} task (Table~\ref{tab:tasks}, left), the optimal centroids obviously lie inside the data-enclosing box, hence it makes sense (as done in~\cite{keriven2016compressive}) to restrict the problem to $\bs c_k \in \Sigma_{\bs l, \bs u}$. These constraints can be encoded in $\Theta$, which in turn imply that $\modelset \subset \setdistributions_{\Sigma_{\bs l, \bs u},0}$. If the data-enclosing box $\Sigma_{\bs l, \bs u}$ is known, Assumption~\ref{as:domain} thus holds for k-means, with $\Sigma = \Sigma_{\bs l, \bs u}$ and $\zeta = 0$.

For the \emph{Gaussian mixture modeling} task (Table~\ref{tab:tasks}, right), one can similarly enforce that the Gaussian centers lie in the box $\bs \mu_k \in \Sigma_{\bs l, \bs u}$. Moreover, given that the data lie in a bounded domain, it is also reasonable to bound the variance of the Gaussian modes (the typical spread of a Gaussian mode should not be much larger than the box). This can be done by bounding the eigenvalues of the covariance matrices $\bs \Gamma_k$, \ie $\lambda_{\max}(\bs \Gamma_k) \leq S$ for a bound $S > 0$ to be set according to the size of the box. Assuming 
diagonal covariances, as commonly done CL for GMM~\cite{keriven2016GMMestimation}, the following lemma shows that $\modelset \subset \setdistributions_{\Sigma^{(\rho)},\zeta}$ for some slightly extended set $\Sigma^{(\rho)} \supset \Sigma_{\bs l, \bs u}$ and small $\zeta>0$.

\begin{lemma}[Assumption~\ref{as:domain} for GMM with box contraints]
\label{lem:GMMassumption}
	Let $\modelset$ be the model set of a GMM task with diagonal covariances $\bs \Gamma_k$ and the box constraints $\bs \mu_k \in \Sigma_{\bs l, \bs u}$ and $\lambda_{\max}(\bs \Gamma_k) \leq S$. Given $\rho > d$, we define the bounds of an ``extended'' box $\wt{\bs l} = \bs l - \rho S \bs{1}$ and $\wt{\bs u} = \bs u + \rho S \bs{1}$, with $\bs{1} = (1, 1, ..., 1) \in \bb R^d$. Then, for $\Sigma^{(\rho)} = \Sigma_{\wt{\bs l}, \wt{\bs u}}$, we have
	$$\ts \modelset \subset \setdistributions_{\Sigma^{(\rho)},\zeta},\quad\text{with}\ \zeta \lesssim e^{-\rho^2}.$$
\end{lemma}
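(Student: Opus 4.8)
The plan is to reduce the claim to a tail bound for a single Gaussian mode, then take a union bound over the $K$ modes of the mixture. Fix any $\cl P_{\bs\theta}\in\modelset$, writing $\cl P_{\bs\theta}=\sum_{k=1}^K w_k\,\cl N(\bs\mu_k,\bs\Gamma_k)$ with $\bs\mu_k\in\Sigma_{\bs l,\bs u}$ and $\bs\Gamma_k$ diagonal with $\lambda_{\max}(\bs\Gamma_k)\leq S$. Since $\Sigma^{(\rho)}=\Sigma_{\wt{\bs l},\wt{\bs u}}$ is a box, its complement decomposes coordinate-wise: a point $\bs x$ lies outside $\Sigma^{(\rho)}$ iff some coordinate $x_j$ violates $\wt l_j\leq x_j\leq \wt u_j$. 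Hence, by the union bound over coordinates,
\begin{equation*}
\ts \cl N(\bs\mu_k,\bs\Gamma_k)\big(\Rbb^d\setminus\Sigma^{(\rho)}\big)\;\leq\;\sum_{j=1}^d \Prob\big[\,|X_j-\mu_{k,j}|>\min\{\mu_{k,j}-\wt l_j,\ \wt u_j-\mu_{k,j}\}\,\big],
\end{equation*}
where $X_j\sim\cl N(\mu_{k,j},\Gamma_{k,jj})$ is the (independent) $j$-th coordinate. Because $\mu_{k,j}\in[l_j,u_j]$ and $\wt l_j=l_j-\rho S$, $\wt u_j=u_j+\rho S$, the distance from $\mu_{k,j}$ to either boundary of the extended interval is at least $\rho S\geq \rho\,\Gamma_{k,jj}$ (using $\Gamma_{k,jj}\leq\lambda_{\max}(\bs\Gamma_k)\leq S$). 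Thus each coordinate deviation is by at least $\rho\,\Gamma_{k,jj}\geq \rho\sqrt{\Gamma_{k,jj}}\cdot\sqrt{S}$ standard deviations whenever $S\geq 1$ (one may absorb constants harmlessly; more precisely the number of standard deviations is $\geq \rho S/\sqrt{\Gamma_{k,jj}}\geq\rho\sqrt S$, and in any case $\geq \rho$ for $S\geq1$).

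The second step is the standard Gaussian tail bound $\Prob[|Z|>t]\leq 2e^{-t^2/2}$ for $Z\sim\cl N(0,1)$. Applying it with $t\geq\rho$ (absorbing the $\sqrt S$ factor, or simply using the crude bound $t\ge\rho$) gives, for each coordinate, a contribution $\lesssim e^{-\rho^2/2}$, hence
\begin{equation*}
\ts \cl N(\bs\mu_k,\bs\Gamma_k)\big(\Rbb^d\setminus\Sigma^{(\rho)}\big)\;\lesssim\; d\,e^{-\rho^2/2}.
\end{equation*}
Averaging over the mixture with weights $w_k\geq0$, $\sum_k w_k=1$, the same bound holds for $\cl P_{\bs\theta}$ itself: $\cl P_{\bs\theta}(\Rbb^d\setminus\Sigma^{(\rho)})\lesssim d\,e^{-\rho^2/2}$. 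Finally, the hypothesis $\rho>d$ lets us absorb the polynomial prefactor $d$ into the exponential: since $d<\rho$, we have $d\,e^{-\rho^2/2}\leq \rho\,e^{-\rho^2/2}\lesssim e^{-\rho^2/2}\cdot\rho e^{-\rho^2/2+\rho^2/2}$, and more cleanly $d\,e^{-\rho^2/2}=e^{-\rho^2/2+\log d}\leq e^{-\rho^2/2+\log\rho}\lesssim e^{-\rho^2}$ for $\rho$ large enough (the loss from $\rho^2/2$ to $\rho^2$ only changes absolute constants hidden in $\lesssim$). Setting $\zeta$ equal to this bound, $\cl P_{\bs\theta}(\Sigma^{(\rho)})\geq 1-\zeta$ with $\zeta\lesssim e^{-\rho^2}$, which is exactly $\cl P_{\bs\theta}\in\setdistributions_{\Sigma^{(\rho)},\zeta}$; since $\bs\theta$ was arbitrary, $\modelset\subset\setdistributions_{\Sigma^{(\rho)},\zeta}$.

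The only mildly delicate point is bookkeeping of constants: one must check that the extension radius $\rho S$ in each coordinate does indeed translate into at least $\Theta(\rho)$ standard deviations \emph{uniformly} over all admissible $\bs\Gamma_k$ and all $\bs\mu_k$ in the box — this is where the \emph{diagonal} covariance assumption is used (so that coordinates are independent and the per-coordinate variance is exactly $\Gamma_{k,jj}\leq S$), and where the choice $\wt{\bs l}=\bs l-\rho S\bs1$ scaling with $S$ (not with $\sqrt S$) is what makes the number of standard deviations grow with $\rho$. With a non-diagonal $\bs\Gamma_k$ one would instead bound $\|\bs\Gamma_k^{-1/2}(\bs x-\bs\mu_k)\|_2$ via a $\chi^2$-type concentration inequality, which is why the lemma restricts to the diagonal case. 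Everything else is a routine union bound plus the Gaussian tail estimate, and the condition $\rho>d$ is precisely what is needed to hide the dimensional prefactor inside $e^{-\rho^2}$.
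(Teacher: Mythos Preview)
Your approach is essentially the paper's: exploit the diagonal covariance to reduce to independent one-dimensional Gaussians, bound the escape probability coordinate-wise, and invoke the standard Gaussian tail $\lesssim e^{-\rho^2/2}$ together with a union bound over the $d$ coordinates. The paper first argues that the worst case is $\bs\Gamma_k=S\bs I_d$ with $\bs\mu_k$ at a corner of $\Sigma_{\bs l,\bs u}$ and then computes $1-\phi^d(\rho)$; you skip this and bound every admissible configuration directly, which is slightly cleaner but lands on the same $\zeta\lesssim d\,e^{-\rho^2/2}$.

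Two caveats. First, your handling of the $S$-scaling (``in any case $\geq\rho$ for $S\geq1$'') quietly adds a hypothesis not in the statement; the honest count is $\rho S/\sqrt{\Gamma_{k,jj}}\geq\rho\sqrt S$ standard deviations, and the paper's proof has the same slip (it writes $\phi(\rho)$ where $\phi(\rho\sqrt S)$ is what the computation gives). Second, and more seriously, your final step ``$d\,e^{-\rho^2/2}\lesssim e^{-\rho^2}$ (the loss from $\rho^2/2$ to $\rho^2$ only changes absolute constants)'' is simply false: the ratio is $d\,e^{\rho^2/2}\to\infty$, so no hidden constant can absorb it. The hypothesis $\rho>d$ only kills the polynomial prefactor (e.g.\ $d/\rho<1$ in $\tfrac{d}{\rho\sqrt{2\pi}}e^{-\rho^2/2}$), yielding $\zeta\lesssim e^{-\rho^2/2}$, not $e^{-\rho^2}$. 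The paper's own proof makes the identical overstatement in its last line; note, however, that downstream (Cor.~\ref{cor:final}) only $\sqrt\zeta\lesssim e^{-\rho^2/4}$ is actually used, which \emph{is} consistent with $\zeta\lesssim e^{-\rho^2/2}$.
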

\begin{proof}
  From \eqref{eq:modelSet}, we can set $\zeta = \sup\{\int_{\Sigma^{(\rho)c}} \ud \cl P: \cl P \in \cl G\}$ to ensure $\cl G \subset \cl M_{\Sigma^{(\rho)}, \zeta}$, \ie the maximal failure probability is reached when the largest amount of the GMMs probability mass lies outside $\Sigma^{(\rho)}$. For diagonal covariances, this is reached if we have both $\bs \Gamma_k = S \bs I_d$, \ie the Gaussian is maximally spread in each dimension, and, by symmetry, each Gaussian mode is located at a corner of the box $\Sigma_{\bs l, \bs u}$, \eg with $\bs \mu_k = \bs u$ for all $k$. Denoting by $\cl P^* \subset \cl G$ this GMM configuration, we easily show that, for $\Sigma^{(\rho)} = \Sigma_{\wt{\bs l}, \wt{\bs u}}$, \ie $\Sigma^{(\rho)} = \Sigma_{\bs l, \bs u} + \rho S \bb B_\infty^d$,  $\int_{\Sigma^{(\rho)}} \ud \cl P^*$ is bounded by $\phi^d(\rho)$, with $\phi$ the cumulative density function of a one-dimensional standard normal random variable. Using well-known Gaussian tail bounds~\cite{vershynin2018high}, we get $\zeta = 1 - \phi^d(\rho) \leq d \cdot \phi(\rho) \leq \frac{d}{\rho \sqrt{2\pi}} e^{-\rho^2/2}$, which decays exponentially fast in $\rho$; in particular $\zeta \lesssim e^{-\rho^2}$ when $\rho > d$.
\end{proof}
To wrap up, Assumption~\ref{as:domain} thus seems fairly reasonable for the GMM task as well. There is however still the issue of which value of $\rho$ one should pick in Lemma~\ref{lem:GMMassumption}; we'll come back to this nontrivial question at the end of this section.

\subsection{Reducing the LPD to the signal-level LPD}
Under Assumption~\ref{as:domain}, to have the LPD, it is sufficient to prove a simpler ``signal-LPD'' (sLPD) over $\Sigma$, defined as
\begin{equation}
\label{eq:signalLPD}
	\forall \bs x, \bs y \in \Sigma, \: |\langle\Psi(\bs x)- \Phi(\bs x),\Phi(\bs y) \rangle| \leq \epsilon_s.
\end{equation}
This is formalized by the following lemma.

\begin{lemma}[sLPD implies LPD]
\label{prop:easyGeneralLPD}
For a compact set $\Sigma$ and $0 \leq \zeta < 1$, assume $\modelset, \measuredset \subset  \setdistributions_{\Sigma,\zeta}$ and the sLPD~\eqref{eq:signalLPD} holds on $\Sigma$ with error $\epsilon_s$. Moreover, assume that the feature maps are bounded, $\sup_{\Vec{x} \in \bb R^d}\| \Phi(\Vec{x}) \|_2 \leq C_{\Phi}$ (similarly for $\Psi$).
Then, $\sketchop[\Psi]$ has the $\LPD(\epsilon;\measuredset,\modelset, \sketchop[\Phi])$ with error $\epsilon = \epsilon_s + c \zeta$, where $c \leq 2C_{\Phi}( C_{\Phi} +  C_{\Psi})$.
\end{lemma}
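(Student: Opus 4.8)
The plan is to unfold the definition of the LPD inner product, $\langle \sketchop[\Psi](\cl P) - \sketchop[\Phi](\cl P), \sketchop[\Phi](\cl Q)\rangle$, using linearity of the sketch operator in the distribution: $\sketchop[\Psi](\cl P) - \sketchop[\Phi](\cl P) = \int (\Psi(\bs x) - \Phi(\bs x))\, \ud\cl P(\bs x)$ and $\sketchop[\Phi](\cl Q) = \int \Phi(\bs y)\, \ud\cl Q(\bs y)$. Fubini then gives $\langle \sketchop[\Psi](\cl P) - \sketchop[\Phi](\cl P), \sketchop[\Phi](\cl Q)\rangle = \iint \langle \Psi(\bs x) - \Phi(\bs x), \Phi(\bs y)\rangle \, \ud\cl P(\bs x)\, \ud\cl Q(\bs y)$, so the LPD quantity is an average of the \emph{signal-level} quantity $g(\bs x, \bs y) := \langle \Psi(\bs x) - \Phi(\bs x), \Phi(\bs y)\rangle$ against the product measure $\cl P \otimes \cl Q$.

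Next I would split the domain of integration over whether $\bs x \in \Sigma$ and $\bs y \in \Sigma$. On the ``good'' part $\Sigma \times \Sigma$ the sLPD bound~\eqref{eq:signalLPD} gives $|g(\bs x, \bs y)| \leq \epsilon_s$ pointwise, contributing at most $\epsilon_s$ (since $\cl P(\Sigma), \cl Q(\Sigma) \leq 1$). On the ``bad'' part — where $\bs x \notin \Sigma$ or $\bs y \notin \Sigma$ — I use the uniform boundedness of the feature maps: by Cauchy--Schwarz and the triangle inequality, $|g(\bs x, \bs y)| \leq (\|\Psi(\bs x)\|_2 + \|\Phi(\bs x)\|_2)\,\|\Phi(\bs y)\|_2 \leq (C_\Psi + C_\Phi) C_\Phi$ for \emph{all} $\bs x, \bs y$. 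The product measure of the bad part is at most $\cl P(\Sigma^c) + \cl Q(\Sigma^c) \leq 2\zeta$ by a union-bound-type estimate ($\Prob[\bs x \notin \Sigma \text{ or } \bs y \notin \Sigma] \leq \Prob[\bs x\notin\Sigma] + \Prob[\bs y\notin\Sigma]$ under the product measure, using $\cl P, \cl Q \in \setdistributions_{\Sigma,\zeta}$). Putting the two pieces together yields $|\langle \sketchop[\Psi](\cl P) - \sketchop[\Phi](\cl P), \sketchop[\Phi](\cl Q)\rangle| \leq \epsilon_s + 2\zeta (C_\Psi + C_\Phi) C_\Phi =: \epsilon_s + c\zeta$ with $c \leq 2 C_\Phi(C_\Phi + C_\Psi)$, which is exactly the claimed bound.

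The bookkeeping obstacle, such as it is, will be handling the ``bad'' region cleanly: the complement of $\Sigma\times\Sigma$ inside $\Rbb^d\times\Rbb^d$ is $(\Sigma^c\times\Rbb^d)\cup(\Rbb^d\times\Sigma^c)$, and I want to bound $(\cl P\otimes\cl Q)$ of this set by $\zeta + \zeta = 2\zeta$ rather than double-counting or losing a factor. This follows immediately from monotonicity and subadditivity of the product measure, together with $\cl P(\Sigma^c) = 1 - \cl P(\Sigma) \leq \zeta$ and likewise for $\cl Q$. One should also note the constant $c$ need not be sharp — the statement only claims $c \leq 2C_\Phi(C_\Phi + C_\Psi)$ — so no delicate optimization is needed; a slightly crude bound suffices. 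Apart from invoking Fubini (legitimate here since the integrand is bounded and the measures are probability measures), every step is an elementary estimate, so I expect no genuine difficulty.
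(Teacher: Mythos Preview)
Your proposal is correct and follows essentially the same approach as the paper: both unfold the LPD inner product via Fubini into a double integral of the ``difference kernel'' $\langle \Psi(\bs x)-\Phi(\bs x),\Phi(\bs y)\rangle$, bound it uniformly by $C_\Phi(C_\Phi+C_\Psi)$ via Cauchy--Schwarz, and split the integration domain according to whether $\bs x,\bs y\in\Sigma$. The only cosmetic difference is that the paper decomposes into the four regions $\Sigma\times\Sigma$, $\Sigma^c\times\Sigma$, $\Sigma\times\Sigma^c$, $\Sigma^c\times\Sigma^c$ and sums their measures explicitly, whereas you bound the complement of $\Sigma\times\Sigma$ directly by subadditivity; both routes land on the same $\epsilon_s+2C_\Phi(C_\Phi+C_\Psi)\zeta$.
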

\begin{proof}
	Define the ``difference kernel''
	$$\wt{\kappa}(\bs x,\bs y) := \langle \Psi(\bs x) - \Phi(\bs x) ,\Phi(\bs y) \rangle,$$
	which is bounded by $|\wt{\kappa}(\bs x,\bs y)| \leq  C_{\Phi}( C_{\Phi} +  C_{\Psi}) =: C_{\wt{\kappa}}$ using Cauchy-Schwarz. Given any $\cl P,\cl Q \in \setdistributions_{\Sigma,\zeta} $, we have
	\begin{align*}
          &\ts | \langle\sketchop[\Psi] (\cl P) - \sketchop[\Phi](\cl P) , \sketchop[\Phi](\cl Q) \rangle | = \big| \expec{\bs x \sim \cl P}\expec{\bs y \sim \cl Q} \wt{\kappa}(\bs x,\bs y) \big|\\
          &\ts \leq \expec{\bs x \sim \cl P}\expec{\bs y \sim \cl Q} \big| \wt{\kappa}(\bs x,\bs y) \big|\\
        &= \ts I(\Sigma, \Sigma)+I(\Sigma^c, \Sigma)+I(\Sigma, \Sigma^c)+I(\Sigma^c, \Sigma^c)\\
	&\leq \epsilon_s (1-\zeta)^2 +  C_{\wt{\kappa}} (2\zeta(1-\zeta) + \zeta^2 ) \leq \epsilon_s + 2C_{\wt{\kappa}} \zeta,
	\end{align*}
        with $I(U,V) = \Integr{U}{}{\Integr{V}{}{\left|\wt{\kappa}(\bs x,\bs y) \right| \:}{\cl P(\Vec{x})}}{\cl Q(\Vec{y})}$, $U, V \subset \bb R^d$.
\end{proof}

Combining with Prop.~\ref{prop:risk_thiswork}, under Assumption~\ref{as:domain}, the sLPD~\eqref{eq:signalLPD} with error $\epsilon_s$ thus implies the excess risk is controlled, with an additive increase given by $4\sqrt{\epsilon_s + c\zeta}$ compared to Thm.~\ref{prop:risk_gribonval}.

\subsection{Proving the signal-LPD for random periodic features}

The tools developed up to now were purposefully as general as possible. We now focus on proving the LPD for the case where the reference feature map are random Fourier features~\eqref{eq:RFF} $\Phi_{\RFF}$, and the distorted feature map are random periodic features~\eqref{eq:RPF} $\Psi_f$; as a reminder, these are defined by replacing $\exp(\im t)$ in the usual RFF by a generic $2\pi$-periodic function $f(t)$, \ie $\ts \Psi_f(\bs x) := \frac{1}{\sqrt{m}} f(\bs\Omega^{\top}\bs x + \bs \xi)$. Quantized RFF~\eqref{eq:qRFF} $\Psi_{q}$ are a particular case of RPF, further explored in the next subsection.

To satisfy the sLPD, the RPF feature map must be scaled by $F_1$, the first Fourier Series (FS) coefficient of $f$~\cite{schellekens2018quantized,schellekens2020breaking}. We thus consider the \emph{renormalized features} $\overline{\Psi}_{f} := \frac{1}{F_1}\Psi_f$ during the ACL optimization procedure. This renormalization is not restrictive, it can for example be performed after the sketch has been computed, since $\bs z_{\overline{\Psi}_{f},\cl X} = \frac{1}{F_1} \bs z_{\Psi_{f},\cl X}$.

We can then prove our second main result: the LPD~\eqref{eq:LPD} holds---with high probability on the draw of $\bs\Omega$ and $\Vec{\xi}$---for the (normalized) RPF sketch operator. The proof is based on Lemma~\ref{prop:easyGeneralLPD} to reduce the LPD to the sLPD, and the latter is shown using a result of our previous work~\cite[Cor.1]{schellekens2020breaking}.

\begin{proposition}[LPD for normalized RPF]
\label{prop:slPDforRPF}
Let us consider a compact set $\Sigma$ with Kolmogorov $\nu$-entropy $\cl H_{\nu}(\Sigma) < \infty$, and 
  \begin{itemize}[leftmargin=*]
  \item the random Fourier features $\Phi_{\RFF}$ defined in~\eqref{eq:RFF} associated with the distribution $\Lambda$ (which generates the $m$ columns of $\bs\Omega$), with \emph{smoothness constant}
    \begin{equation}
      \label{eq:smoothness-constant}
      \ts C_{\Lambda} := \max_{\bs a \in \bb R^d, \|\bs a\|_2 = 1} \; \expec{\bs \omega \sim \Lambda} |\bs \omega^{\top} \bs a| < \infty,
    \end{equation}
      \item the related RPF $\overline{\Psi}_{f} := \frac{1}{F_1}\Psi_f$, with $\Psi_f$ defined in~\eqref{eq:RPF}, and $f$ a periodic, mean Lipschitz smooth function with constant $L^{\mu}_f < \infty$ (see Def.~\ref{def:meanSmoothness}), FS coefficients $\{F_k\}_{k\in \bb N}$, and $C_f := (1+\|f\|_{\infty}/|F_1|) < \infty$.
  \end{itemize}
  Given some $0 \leq \zeta < 1$ and $\epsilon_0>0$, assume that:
  \begin{itemize}[leftmargin=*]
  \item[] (i) $\modelset, \measuredset \subset  \setdistributions_{\Sigma,\zeta}$ (Assumption~\ref{as:domain});
  \item[] (ii) the sketch dimension $m$ satisfies
	$$m \geq 128  \cdot \epsilon_0^{-2} \cdot \cl H_{\epsilon_0/c_f}(\Sigma)$$
	with constant $c_f := 4 C_{\Lambda} (4 + L^{\mu}_f/|F_1|)$.
  \end{itemize}
	
	\noindent Then, with probability exceeding $1-3\exp(-m \epsilon_0^2/64)$ on the draw of $\bs\Omega$ and $\Vec{\xi}$, the normalized RPF sketch operator $\sketchop[\overline{\Psi}_{f}]$ has the LPD over $\measuredset$ with respect to $\sketchop[\Phi_{\RFF}]$, with error
	$$\epsilon= C_f \big( \epsilon_0 +2 \zeta \big) .$$
\end{proposition}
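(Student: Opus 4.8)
The plan is to follow the two-step route announced just above the statement: reduce the distribution-level LPD~\eqref{eq:LPD} to the signal-level sLPD~\eqref{eq:signalLPD} on $\Sigma$ by invoking Lemma~\ref{prop:easyGeneralLPD}, and then establish the sLPD for the pair $(\Phi_{\RFF}, \overline{\Psi}_f)$ through a uniform concentration argument over $\Sigma$ supplied by~\cite[Cor.~1]{schellekens2020breaking}. For the first step I would start by checking the boundedness hypothesis of Lemma~\ref{prop:easyGeneralLPD}: every entry of $\Phi_{\RFF}(\bs x)$ has modulus $1/\sqrt m$, so $\|\Phi_{\RFF}(\bs x)\|_2 = 1 =: C_\Phi$ for all $\bs x$, while $\|\overline{\Psi}_f(\bs x)\|_2 \leq \|f\|_\infty / |F_1| =: C_\Psi$; hence $C_\Phi(C_\Phi + C_\Psi) = 1 + \|f\|_\infty/|F_1| = C_f$ and the constant $c$ in Lemma~\ref{prop:easyGeneralLPD} obeys $c \leq 2C_f$. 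Combined with hypothesis (i) ($\modelset, \measuredset \subset \setdistributions_{\Sigma,\zeta}$, \ie Assumption~\ref{as:domain}), Lemma~\ref{prop:easyGeneralLPD} then turns an sLPD of error $\epsilon_s$ into the $\LPD(\epsilon;\measuredset,\modelset,\sketchop[\Phi_{\RFF}])$ with $\epsilon = \epsilon_s + 2 C_f \zeta$. So everything reduces to proving the sLPD on $\Sigma$ with $\epsilon_s = C_f \epsilon_0$ and failure probability at most $3 \exp(-m\epsilon_0^2/64)$.

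For the sLPD I would work with the random \emph{difference kernel} $\wt\kappa(\bs x,\bs y) := \langle \overline{\Psi}_f(\bs x) - \Phi_{\RFF}(\bs x),\, \Phi_{\RFF}(\bs y)\rangle$, which is a normalized sum of $m$ independent contributions, one per column pair $(\bs\omega_j, \xi_j)$ of $(\bs\Omega,\bs\xi)$, each of modulus $\leq C_f/m$ by Cauchy--Schwarz. The key algebraic point --- the same one behind~\eqref{eq:qckm_motivation} --- is that renormalizing by $F_1$ aligns the fundamental frequency of $f$ with the RFF: expanding $f = \sum_k F_k e^{\im k \cdot}$ and integrating the dither, $\expec{\xi}\!\big[\tfrac{1}{F_1} f(\bs\omega^\top\bs x + \xi)\,\overline{e^{\im(\bs\omega^\top\bs y + \xi)}}\big] = e^{\im\bs\omega^\top(\bs x-\bs y)} = \expec{\xi}\!\big[ e^{\im(\bs\omega^\top\bs x + \xi)}\,\overline{e^{\im(\bs\omega^\top\bs y + \xi)}}\big]$, so each contribution is centered and $\expec{\bs\Omega,\bs\xi}\wt\kappa(\bs x,\bs y) = 0$ for every fixed $(\bs x,\bs y)$. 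A Hoeffding-type inequality on the $m$ bounded centered terms then gives $|\wt\kappa(\bs x,\bs y)| \leq C_f\epsilon_0$ for a fixed pair with failure probability $\lesssim \exp(-m\epsilon_0^2/C)$, which already explains the $m \gtrsim \epsilon_0^{-2}$ scaling.

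It remains to make this uniform over $\bs x,\bs y\in\Sigma$, and this is exactly what~\cite[Cor.~1]{schellekens2020breaking} provides: one covers $\Sigma$ by a finite $\nu$-net with resolution $\nu \sim \epsilon_0/c_f$ --- hence the Kolmogorov-entropy term $\cl H_{\epsilon_0/c_f}(\Sigma)$ and the sample-complexity requirement $m \geq 128\,\epsilon_0^{-2}\,\cl H_{\epsilon_0/c_f}(\Sigma)$ --- applies the pointwise bound plus a union bound at the net points, and controls the oscillation of $\wt\kappa$ between neighbouring net points by combining the mean Lipschitz smoothness of $f$ (constant $L^\mu_f$, Def.~\ref{def:meanSmoothness}) with the average-slope bound $C_\Lambda$ of the random projections $\bs\omega\mapsto\bs\omega^\top\bs a$; this is the role of $c_f = 4 C_\Lambda(4 + L^\mu_f/|F_1|)$, and using the \emph{mean} smoothness (rather than pointwise Lipschitzness) is precisely what lets the argument tolerate discontinuous $f$ such as the square wave $q$. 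Collecting the handful of events produced along the way yields the probability $1 - 3\exp(-m\epsilon_0^2/64)$ and gives the sLPD with $\epsilon_s = C_f\epsilon_0$; plugging this back into the Step~1 bound produces $\epsilon = C_f(\epsilon_0 + 2\zeta)$, as claimed.

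I expect the genuinely delicate part --- the uniform control of $\wt\kappa$ over the continuum $\Sigma$ for a possibly discontinuous $f$ --- to be entirely absorbed by~\cite[Cor.~1]{schellekens2020breaking}, so the work here is mostly (a) recognizing that $\wt\kappa$ fits the template of that corollary after the $F_1$-renormalization and the centering identity above, and (b) carefully tracking the constants $C_f$, $c_f$, $C_\Lambda$, $L^\mu_f$ and $|F_1|$ so that the final error bound and probability come out in exactly the stated form; by contrast Step~1 is routine bookkeeping about the norms of the feature maps.
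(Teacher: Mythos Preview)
Your proposal is correct and follows essentially the same two-step route as the paper: reduce the LPD to the sLPD via Lemma~\ref{prop:easyGeneralLPD} (with the same bound $c\leq 2C_f$), then obtain the sLPD uniformly on $\Sigma$ by invoking~\cite[Cor.~1]{schellekens2020breaking}. The only detail the paper makes explicit that you leave implicit is a preliminary normalization: to meet the $\|\cdot\|_\infty\leq 1$ hypothesis of that corollary, the paper packages the difference as a single periodic function $\hat f(t)=D^{-1}\big(F_1^{-1}f(t)-e^{\im t}\big)$ with $D=\max(1,\|F_1^{-1}f-e^{\im\cdot}\|_\infty)$, which yields $\epsilon_s=D\epsilon_0\leq C_f\epsilon_0$ and lets the mean-Lipschitz bookkeeping produce exactly the constant $c_f$.
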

\begin{proof}
  Let us define the periodic function $\wh{f}(t) = \frac{1}{D}\big(\frac{1}{F_1} f(t) -  \exp(\im t)\big)$, with $D := \max(1,\|\frac{1}{F_1} f(\cdot) - \exp(\im \cdot) \|_{\infty})$. Since $\| \exp(\im \cdot) \|_{\infty} \leq 1$ and $\| \wh{f} \|_{\infty} \leq 1$, Corollary 1 from~\cite{schellekens2020breaking} ensures that, for any $\epsilon_0 > 0$, if $m \geq 128 \cdot \epsilon_0^{-2} \cdot \cl H_{\epsilon_0/c}(\Sigma)$, then, with probability at least $1-3\exp(-m \epsilon_0^2/64)$, we have for all $\bs x,\bs y \in \Sigma$
    \begin{equation}
      \label{eq:sLPD-tmp}
      \ts \frac{1}{m}\big| \langle \wh{f}(\bs\Omega^{\top}\Vec{x}+\Vec{\xi}), \exp(\im(\bs\Omega^{\top}\Vec{y}+\Vec{\xi})) \rangle  \big| \leq \epsilon_0,
    \end{equation}
     where the constant $c$ in the metric entropy radius is 
	$$c = 4 C_{\Lambda}(L^{\mu}_{\wh{f}} + L^{\mu}_{\exp(\im \cdot)} + 2\min(L^{\mu}_{\wh{f}}, L^{\mu}_{\exp(\im \cdot)})).$$
	By definition of $\wh{f}$,  $\Phi_{\RFF}$, and $\overline{\Psi}_f$, the bound \eqref{eq:sLPD-tmp} is equivalent to the sLPD property~\eqref{eq:signalLPD} with error $\epsilon_s = D \epsilon_0$.

	The mean Lipschitz constant of the relevant functions reads $L^{\mu}_{\exp(\im \cdot)} \leq 1$ and $L^{\mu}_{\wh{f}} \leq D^{-1}(\frac{1}{|F_1|}L^{\mu}_{f} + L^{\mu}_{\exp(\im \cdot)})$, and we apply the simplification $\min(L^{\mu}_{\wh{f}}, L^{\mu}_{\exp(\im \cdot)}) \leq L^{\mu}_{\exp(\im \cdot)}$. Since $D \geq 1$, we thus get that $c$ is upper bounded by
	$$\ts 4 C_{\Lambda}\big(3 + D^{-1}(1 + \frac{1}{|F_1|}L^{\mu}_{f} )\big) \leq c_f := 4 C_{\Lambda}(4 + \frac{1}{|F_1|}L^{\mu}_{f} ).$$
        
    We can now turn the sLPD \eqref{eq:sLPD-tmp} into the LPD (with a proper rescaling of the error) by applying Lemma~\ref{prop:easyGeneralLPD}. We note that $C_{\Phi_{\RFF}} \leq \| \exp(\im \cdot) \|_{\infty} = 1$ and $C_{\overline{\Psi}_{f}} \leq \|f / F_1\|_{\infty} = \|f\|_{\infty} / |F_1|$, which implies that $2 C_{\Phi_{\RFF}} (C_{\Phi_{\RFF}} + C_{\overline{\Psi}_{f}}) \leq 2( 1 +   \|f\|_{\infty} / |F_1|) = 2C_f$. Finally, since $D \leq (1 + \|f\|_{\infty}/F_1) = C_f$, Lemma~\ref{prop:easyGeneralLPD} shows that the desired LPD holds with error $C_f(\epsilon_0 + 2 \zeta)$.

\end{proof}

\subsection{ACL guarantees for quantized or modulo contributions}

To formulate our final guarantees and use Lemma~\ref{lem:GMMassumption}, we need this bound on the Kolmogorov entropy of $\Sigma_{\wt{\bs l}, \wt{\bs u}} \supset \Sigma_{\bs l, \bs u}$.
\begin{lemma}
\label{lem:entropyOfExtendedBox}
	In the notations of Lemma~\ref{lem:GMMassumption}, we have 
	\begin{equation}
          \label{eq:tilde-box-entropy}
		\ts \cl H_{\nu}(\Sigma_{\wt{\bs l}, \wt{\bs u}}) \leq d \log\big(1 + \frac{\sqrt d (2 \rho S + \|\bs u - \bs l\|_\infty)}{\nu}\big).
	\end{equation}
\end{lemma}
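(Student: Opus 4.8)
The plan is the textbook grid‑covering bound for a hyperrectangle, the only subtlety being that the paper's covering number~\eqref{eq:covering_number} requires the centres to lie \emph{inside} the set, so I will construct the net explicitly as a subset of $\Sigma_{\wt{\bs l}, \wt{\bs u}}$. First I would bound the side lengths: from $\wt{\bs l} = \bs l - \rho S \bs{1}$ and $\wt{\bs u} = \bs u + \rho S \bs{1}$ (notation of Lemma~\ref{lem:GMMassumption}), the $i$‑th edge of $\Sigma_{\wt{\bs l}, \wt{\bs u}}$ has length $\wt u_i - \wt l_i = (u_i - l_i) + 2\rho S \leq \|\bs u - \bs l\|_\infty + 2\rho S =: \ell$, so the extended box fits inside a cube of side $\ell$.

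Next I would exhibit an explicit $\nu$‑net. Along each axis $i$, take the points $\wt l_i + k\,\nu/\sqrt d$ for $k = 0,1,\dots,K_i$ with $K_i := \lfloor \sqrt d\,(\wt u_i - \wt l_i)/\nu \rfloor$, and let $\cl S$ be the Cartesian product of these one‑dimensional grids. Each such point lies in $\Sigma_{\wt{\bs l}, \wt{\bs u}}$ since $K_i\,\nu/\sqrt d \leq \wt u_i - \wt l_i$, hence $\cl S \subset \Sigma_{\wt{\bs l}, \wt{\bs u}}$. For any $\bs x$ in the box, the distance along coordinate $i$ to the nearest grid value is at most $\nu/\sqrt d$ — inside a cell the half‑spacing $\tfrac12\nu/\sqrt d$ suffices, and past the last grid point the leftover $(\wt u_i - \wt l_i) - K_i\,\nu/\sqrt d$ is strictly less than $\nu/\sqrt d$ by the floor — so the Euclidean distance from $\bs x$ to $\cl S$ is at most $\sqrt{d\,(\nu/\sqrt d)^2} = \nu$. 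Thus $\Sigma_{\wt{\bs l}, \wt{\bs u}} \subset \cl S + \nu\bb B_2^d$ with $\cl S \subset \Sigma_{\wt{\bs l}, \wt{\bs u}}$, making $\cl S$ admissible in~\eqref{eq:covering_number}.

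Finally I would count: $\cl C_\nu(\Sigma_{\wt{\bs l}, \wt{\bs u}}) \leq |\cl S| = \prod_{i=1}^d (K_i + 1) \leq \prod_{i=1}^d \big(1 + \sqrt d\,(\wt u_i - \wt l_i)/\nu\big) \leq \big(1 + \sqrt d\,\ell/\nu\big)^d$, and combining $\cl H_\nu = \log \cl C_\nu$ with $\ell = 2\rho S + \|\bs u - \bs l\|_\infty$ yields exactly~\eqref{eq:tilde-box-entropy}. There is no genuine difficulty here; the only points to watch are keeping the net inside the box (which dictates anchoring the grid at $\wt{\bs l}$ and truncating with a floor) and the tight treatment of the boundary cell near $\wt u_i$, both resolved by the choice of $K_i$.
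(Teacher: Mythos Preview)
Your proof is correct and follows essentially the same grid-covering strategy as the paper: both bound the extended box by a cube of side $2\rho S + \|\bs u - \bs l\|_\infty$, cover it with an $\ell_\infty$-grid of spacing $\nu/\sqrt d$, and pass to an $\ell_2$-covering of radius $\nu$ via the inequality $\|\cdot\|_2 \leq \sqrt d\,\|\cdot\|_\infty$. The only difference is that the paper delegates the grid count to Pisier's Lemma~4.10 for the $\ell_\infty$-ball, whereas you construct the net explicitly and are more careful to keep the centres inside $\Sigma_{\wt{\bs l},\wt{\bs u}}$ as required by the definition~\eqref{eq:covering_number}.
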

\begin{proof}
Since $\Sigma_{\wt{\bs l}, \wt{\bs u}} \subset \bs c + (\rho S + r) \bb B_{\infty}^d$, with $2 \bs c = \bs l + \bs u$ and $2 r = \|\bs u - \bs l \|_\infty$, the entropy of $\Sigma_{\wt{\bs l}, \wt{\bs u}}$ is bounded by the one of $\cl B := (\rho S + r) \bb B_{\infty}^d$.  From Lemma 4.10 in \cite{pisier1999volume}, given $\nu' > 0$, there exists a $\nu'$-covering $\cl S$ of $\cl B$ in the $\ell_\infty$-metric---\ie for all $\bs x \in \cl B$, there is one $\bs q \in \cl S$ such that $\|\bs x - \bs q\|_\infty \leq \nu'$---where $|\cl S|\leq(1 + \frac{2(\rho S + r)}{\nu'})^d$.  Since $\|\bs x - \bs q\|_\infty \geq \|\bs x - \bs q\|_2/\sqrt{d}$, $\cl S$ is also a ($\ell_2$) covering of $\cl B$ with radius $\nu' \sqrt{d}$. Taking $\nu'=\nu/\sqrt{d}$ shows that the Kolmogorov $\nu$ entropy of $\cl B$, and thus that of $\Sigma_{\wt{\bs l}, \wt{\bs u}}$, is bounded as in~\eqref{eq:tilde-box-entropy}. 
\end{proof}

We can finally combine all our results together to obtain statistical learning guarantees (excess risk bounds) for the ACL problem with RPF sketches (and in particular, for quantized or modulo RFF), when solving the tasks of k-means and GMM specifically, under the assumption that the data is constrained in the box $\Sigma_{\bs l, \bs u}$.

\begin{corollary}
\label{cor:final}
In the notations of Prop.~\ref{prop:slPDforRPF}, consider $\solquant$ the solution to the ACL problem~\eqref{eq:asymsketchmatching}, where the sketch is obtained by the normalized random periodic features $\overline{\Psi}_f$, and the reference sketch map is $\Phi_{\RFF}$.

  We assume that, \emph{(i)} all the data samples lie in a box $\Sigma_{\bs l, \bs u}$, and the known upper and lower bounds $\bs l$ and $\bs u$ are used to restrict the optimization procedure (for GMM, a constant $S$ is also used to upper bound the variance of the modes), and \emph{(ii)}, the LRIP holds for $\Phi_{\RFF}$ on the chosen task with constant $\lripc$.
  
  Then, given $\epsilon_0>0$, we have the following guarantees with probability at least $1-3\exp(-m\epsilon_0^2/64)$: 
  \begin{itemize}[leftmargin=*]
		\item{} [for \textbf{k-means}] If the sketch size is at least 
                  $$\ts m \geq 128 \cdot \epsilon_0^{-2} \cdot d \log\big(1 + \frac{c_f \sqrt d \|\bs u - \bs l\|_\infty}{\epsilon_0}\big),$$
		then, for $C_{\rm km} := 4 \lripc \sqrt{C_f}$, the excess risk is bounded by
		$$\ts \cl R(\solsketch';\disttrue) - \cl R(\soltruerisk;\disttrue) \leq \eta + C_{\rm km} \sqrt{\epsilon_0},$$
		with $\eta$ the excess risk in symmetric CL~\eqref{eq:excessrisk_gribonval}. 
		\item{} [for \textbf{GMM}] For some $\rho > d$, if the sketch size is at least, 
		\begin{equation*}
		\begin{split}
			\ts m \geq 128 \cdot \epsilon_0^{-2} \cdot d \log\big(1 + \frac{c_f \sqrt d (2\rho S + \|\bs u - \bs l\|_\infty)}{\epsilon_0}\big),
		\end{split}
		\end{equation*}
		then the excess risk is bounded by
		$$ \cl R(\solsketch';\disttrue) - \cl R(\soltruerisk;\disttrue) \leq \eta + C_{\rm km} \sqrt{\epsilon_0 } + C_{\rm gmm} \cdot  e^{-\rho^2/4}, $$
                with $C_{\rm gmm} := 4 \lripc \big( \frac{2 C_f^2}{\pi} \big)^{\frac{1}{4}}$, and $\eta$ and $C_{\rm km}$ as above.
	\end{itemize}
\end{corollary}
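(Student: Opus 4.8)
The plan is to stitch together the chain of results assembled in Sections~\ref{sec:main-general} and~\ref{sec:main-quantized}, so that essentially no new estimate is needed beyond bookkeeping of constants. The backbone is Prop.~\ref{prop:risk_thiswork}: since assumption \emph{(ii)} supplies the $\LRIP(\lripc;\modelset)$ for $\Phi_{\RFF}$, it suffices to exhibit an $\LPD(\epsilon;\measuredset,\modelset,\sketchop[\Phi_{\RFF}])$ for the normalized RPF operator $\sketchop[\overline{\Psi}_f]$; Prop.~\ref{prop:risk_thiswork} then immediately returns the excess-risk bound $\eta + 4\lripc\sqrt{\epsilon}$, with $\eta$ the symmetric guarantee~\eqref{eq:excessrisk_gribonval}. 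The LPD in turn is provided by Prop.~\ref{prop:slPDforRPF}, so the actual work is to (a) pick the correct compact set $\Sigma$ and slack $\zeta$ for each task from the box hypothesis \emph{(i)}, (b) bound the Kolmogorov entropy $\cl H_{\epsilon_0/c_f}(\Sigma)$ so that it matches the stated sketch-size condition, and (c) simplify $\epsilon = C_f(\epsilon_0 + 2\zeta)$ into the advertised form.

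For k-means I would take $\Sigma = \Sigma_{\bs l,\bs u}$ and $\zeta = 0$: by \emph{(i)} the data samples lie in the box, and restricting the centroids to the box (as in Sec.~\ref{sec:assumpt-data-doma}) forces every $\cl P_{\bs\theta}\in\modelset$ to be supported there, so $\modelset,\measuredset\subset\setdistributions_{\Sigma_{\bs l,\bs u},0}$. The entropy bound follows from the same $\ell_\infty$-covering argument used in the proof of Lemma~\ref{lem:entropyOfExtendedBox} with the extension term set to zero, namely $\cl H_{\nu}(\Sigma_{\bs l,\bs u}) \le d\log(1+\sqrt d\,\|\bs u-\bs l\|_\infty/\nu)$; evaluated at $\nu = \epsilon_0/c_f$ this is exactly the required lower bound on $m$. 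Prop.~\ref{prop:slPDforRPF} then yields, on an event of probability at least $1-3\exp(-m\epsilon_0^2/64)$, the LPD with $\epsilon = C_f\epsilon_0$, and $4\lripc\sqrt{C_f\epsilon_0} = C_{\rm km}\sqrt{\epsilon_0}$, which gives the k-means claim via Prop.~\ref{prop:risk_thiswork}.

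For GMM I would instead invoke Lemma~\ref{lem:GMMassumption}: fixing some $\rho > d$ and using the variance bound $\lambda_{\max}(\bs\Gamma_k)\le S$ gives $\modelset\subset\setdistributions_{\Sigma^{(\rho)},\zeta}$ with $\Sigma^{(\rho)} = \Sigma_{\wt{\bs l},\wt{\bs u}}$ and $\zeta\le\frac{d}{\rho\sqrt{2\pi}}e^{-\rho^2/2}\le\frac{1}{\sqrt{2\pi}}e^{-\rho^2/2}$; since the data still lie in $\Sigma_{\bs l,\bs u}\subset\Sigma^{(\rho)}$, also $\measuredset\subset\setdistributions_{\Sigma^{(\rho)},\zeta}$, so hypothesis \emph{(i)} of Prop.~\ref{prop:slPDforRPF} holds with $\Sigma = \Sigma^{(\rho)}$. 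Lemma~\ref{lem:entropyOfExtendedBox} at radius $\epsilon_0/c_f$ bounds $\cl H_{\epsilon_0/c_f}(\Sigma^{(\rho)})$ to match the GMM sketch-size condition, Prop.~\ref{prop:slPDforRPF} returns the LPD with $\epsilon = C_f(\epsilon_0+2\zeta)$ on the same event, and subadditivity of the square root splits $4\lripc\sqrt{\epsilon} \le 4\lripc\sqrt{C_f\epsilon_0} + 4\lripc\sqrt{2C_f\zeta} \le C_{\rm km}\sqrt{\epsilon_0} + 4\lripc(2C_f^2/\pi)^{1/4}e^{-\rho^2/4}$, the second constant being exactly $C_{\rm gmm}$; Prop.~\ref{prop:risk_thiswork} finishes.

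I do not expect a genuine obstacle, since the corollary is a synthesis step, but the place demanding care is the GMM error simplification: one must carry the precise Gaussian-tail constant of Lemma~\ref{lem:GMMassumption} through the square root so as to land on $(2C_f^2/\pi)^{1/4}$ rather than a looser constant, and one should keep $\rho$ as a free parameter so the final bound makes the trade-off explicit (increasing $\rho$ shrinks $e^{-\rho^2/4}$ but inflates the required $m$ through the $2\rho S$ inside the logarithm). I would also double-check that the high-probability event and the metric-entropy radius $\epsilon_0/c_f$ are threaded consistently through Lemma~\ref{prop:easyGeneralLPD} and Prop.~\ref{prop:slPDforRPF}, and that the ACL optimization~\eqref{eq:asymsketchmatching} is understood to use the renormalized map $\overline{\Psi}_f = \Psi_f/F_1$.
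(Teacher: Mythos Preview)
Your proposal is correct and follows essentially the same route as the paper: the paper's proof is the one-line statement that one combines Prop.~\ref{prop:risk_thiswork} and Prop.~\ref{prop:slPDforRPF} with Lem.~\ref{lem:GMMassumption} and Lem.~\ref{lem:entropyOfExtendedBox} (the entropy of $\Sigma_{\bs l,\bs u}$ being the $\rho=0$ case of~\eqref{eq:tilde-box-entropy}), and you have simply spelled out this bookkeeping, including the constant tracking needed to recover $C_{\rm km}$ and $C_{\rm gmm}$.
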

\begin{proof}
The proof consists in applying Prop.~\ref{prop:risk_thiswork} and Prop.~\ref{prop:slPDforRPF}, combined with Lem.~\ref{lem:GMMassumption} and Lem.~\ref{lem:entropyOfExtendedBox}, the entropy of $\Sigma_{\bs l, \bs u}$ being found by setting $\rho = 0$ in \eqref{eq:tilde-box-entropy}. 
\end{proof}

A few remarks can be made about this corollary. First, we rely on the fact that $\Phi_{\RFF}$ satisfies the LRIP; actually, ensuring that this holds (with high probability on the draw of $\bs\Omega$) imposes additional constraints on $m$. They depend on the considered task and the complexity of the related model set $\cl G$; for example, for a GMM with $K$ modes in $\bb R^d$, we should have $m = \Omega(K d)$, up to some additional factors and restrictions on $\cl G$ (see~\cite[Sec. 5.5]{gribonval2020sketching} and \cite{gribonval2020statistical}). Second, the choice of the parameter $\rho$ necessitates solving a trade-off: increasing $\rho$ decreases the excess risk bound (excess risk proportional to $e^{-\rho^2/4}$), but at the cost of logarithmically increasing the required sketch size $m$.

Finally, Cor.~\ref{cor:final} allows us to determine which between quantized or modulo RPF requires more measurements. Indeed, we compute in App.~\ref{sec: comp-mod-q-constant} that $C_q = 1+ \frac{\pi}{2\sqrt 2}<C_{\smod} = 1+ \frac{\sqrt 5 \pi}{4}$ and $c_q = 24 C_\Lambda < c_{\smod} = (24 + 2\sqrt 2) C_\Lambda$. Therefore, both for k-means and GMM, the sample complexities and the excess risk bounds of Cor.~\ref{cor:final} shows that, while requiring a higher number of measurements, ACL with modulo features gets a higher bound on the excess risk compared to that of a quantized sketch. Thus, the sketch size must be further increased (to allow a smaller $\epsilon_0$) when using modulo sketches in order to meet quantized sketch performances. We observe this effect experimentally in the following section.

\section{Experiments}
\label{sec:experiments}

We further validate the ACL approach through three practical numerical experiments. Going beyond the preliminary results in~\cite{schellekens2018quantized} for compressive k-means with quantized contributions $\Psi_q$, we also try out the ACL scheme with the modulo feature map $\Psi_{\smod}$, for the GMM task, explore the impact of the dataset size, and apply it to large-scale settings such as audio feature extraction for event recognition.

Remember that the cost $\cl C_{\Phi}(\bs \theta ; \bs z)$ is not convex, and that the symmetric and asymmetric compressive learning problems, described respectively by~\eqref{eq:sketchmatching} and~\eqref{eq:asymsketchmatching}, cannot be solved exactly in practice. To approximate those solutions, we thus mainly use the \texttt{CLOMP} greedy algorithm~\cite{keriven2016GMMestimation}; the exception is the last experiment which uses the Gaussian splitting algorithm (algorithm 2 in~\cite{keriven2016GMMestimation}) for better performances when the number of Gaussians $K$ is large. Regarding the implementation, we use \texttt{pycle}~\cite{pycle}, a general-purpose compressive learning toolbox in Python.

\emph{Remark}: Since the cost $\cl C_{\Phi}(\bs \theta; \bs z)$ behaves in the same manner with respect to $\bs \theta$ in the ACL case ($\bs z = \bs z_{\Psi,\ds}$) as in the usual symmectric CL case ($\bs z = \bs z_{\Phi,\ds}$), we can use the exact same algorithms in both scenarii.

We first consider a controlled environment, where we generate a synthetic dataset $\ds$ according to a known ``ground-truth'' Gaussian mixture model $\disttrue$.
To evaluate the quality of a CL solution $\solsketch$, we use the \emph{empirical excess risk}, 
\begin{equation}
	\label{eq:emp_excess_risk}
	\Delta \cl R(\solsketch) := \cl R(\solsketch;\distemp) - \cl R(\solerm;\distemp),
\end{equation}
where the empirical risk minimizer $\solerm$ is estimated by keeping the best out of several independent trials of traditional ML algorithms operating on the full dataset: the \texttt{k-means++} algorithm~\cite{Lloyd1982kmeans,Arthur2007kmeans++} for k-means clustering, and Expectation-Maximization~\cite{moon1996expectation} for GMM. 
Referring to Table~\ref{tab:tasks}, for k-means the empirical excess risk corresponds to the excess SSE~\eqref{eq:SSE}, $\Delta \cl R(\solsketch) = \SSE(\solsketch; \ds) - \SSE(\solerm; \ds)$, while for GMM, $\Delta \cl R(\solsketch) = \LL(\solerm; \ds) - \LL(\solsketch; \ds)$ is the excess negative log-likelihood~\eqref{eq:loglikelihood}. Because the performances depend on the random draw of $\bs \Omega$ and $\bs \xi$, we perform several independent trials of (A)CL and report the median performance.

The numerical value of the excess risk, while relevant to our theoretical guarantees, is not always easy to interpret. Therefore, another metric we use to assess the quality of solutions $\solsketch$ is the \emph{success rate}: the average number of ``successes'' obtained over all trials. For our purposes, we arbitrarily define the ``success'' of solution $\solsketch$ as follows: when we solve k-means, $\solsketch$ succeeds if $\SSE(\solsketch; \ds) \leq 1.2 \times \SSE(\solerm; \ds)$; when we solve GMM, $\solsketch$ succeeds if $\LL(\solsketch; \ds) \geq \frac{\LL(\solerm; \ds)}{1.2}$ (where we ensure $\LL(\solerm; \ds) > 0$).

\subsection*{Experiment 1: synthetic data, quantized/modulo features}
For this first experiment, $\cl P_0$ is a mixture of $K = 10$ Gaussian modes in dimension $d = 5$, from which we draw a dataset $\ds$ of $n = 10^5$ samples. We then sketch this dataset, using the standard random Fourier features $\Phi_{\RFF}$, but also the quantized RFF $\Psi_{q}$ and the modulo feature map $\Psi_{\smod}$, and solve both k-means and GMM from those sketches\footnote{The fully symmetric CL case, where $\Phi_{\RFF}$ is used for sketching, does not require the dither $\bs \xi$, so we impose $\bs \xi = \bs 0$ in that case. In the asymmetric case, recall we moreover perform a normalization $\bs z_{\overline{\Psi}_{f},\cl X} = \frac{1}{F_1} \bs z_{\Psi_{f},\cl X}$ before learning.}. We draw a varying amount $m$ of random frequencies $\bs \omega_j \distiid \Lambda$ from $\Lambda$ given by the ``Folded Gaussian'' heuristic described in~\cite{keriven2016GMMestimation}, with scale $\sigma^2 = \frac{1}{10d}$ for k-means and  $\sigma^2 = \frac{1}{100d}$ for GMM. Compared to a Gaussian distribution, this folded variant improves the sampling of low frequencies.

\begin{figure}
	\centering
	\subfloat[k-means]{
		\includegraphics[height=165px]{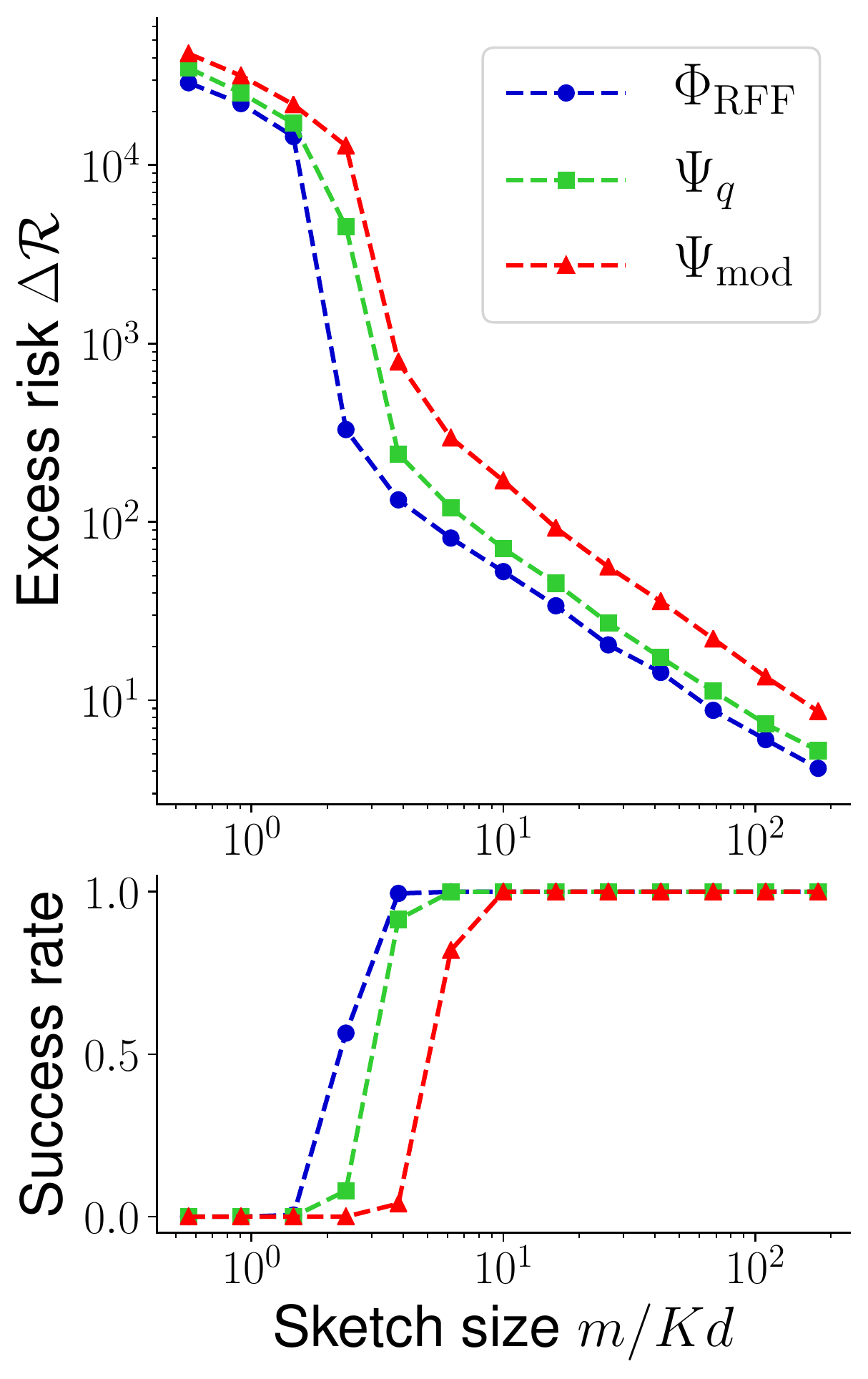}
		\label{fig:synth_0_kmeans}}
	\hspace{4px}
	\subfloat[GMM]{
		\includegraphics[height=165px]{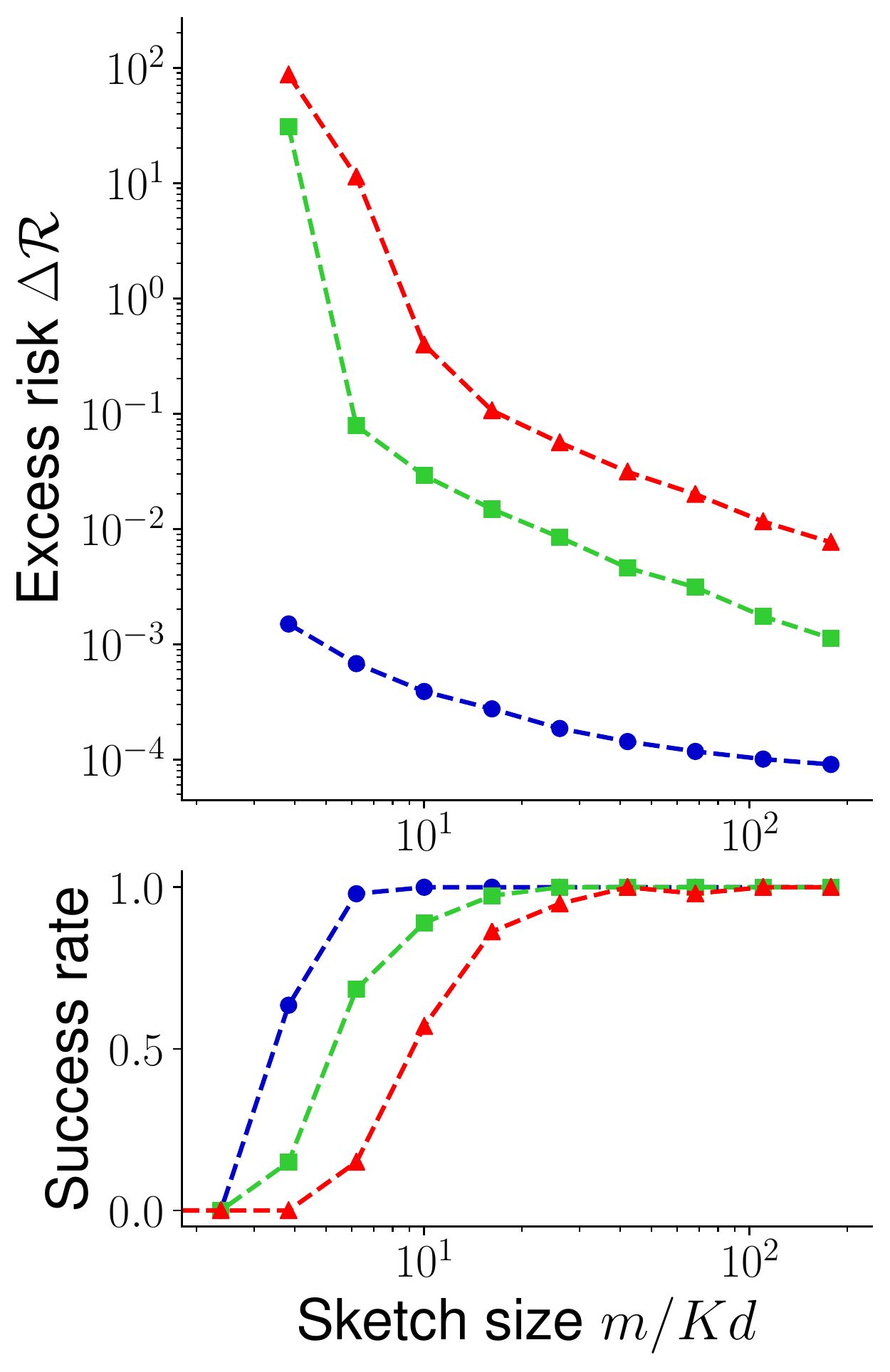}
		\label{fig:synth_0_gmm}}
	\caption{\emph{Top}: empirical excess risk~\eqref{eq:emp_excess_risk} for the k-means (\emph{left}) and GMM (\emph{right}) tasks, as a function of sketch size, obtained by the following (A)CL strategies: usual symmetric CL with $\Psi = \Phi = \Phi_{\RFF}$ (blue circles), asymmetric CL with quantized sketch contributions $\Psi = \Psi_{q}$ (green squares), and modulo sketch contributions $\Psi = \Psi_{\smod}$ (red triangles). Each data point is the median out of 25 or more independent trials. \emph{Bottom:} the associated success rate.}
	\label{fig:synth_0}
\end{figure}

The results are shown Fig.~\ref{fig:synth_0}. From Fig.~\ref{fig:synth_0_kmeans}, as we already observed in~\cite{schellekens2018quantized}, in the case of k-means one can use quantized sketch contributions with only a minor performance decrease (or, equivalently, a slight increase of the sketch size reaches the same performances). Moreover, ACL with modulo sketch contributions is also successful, but the sketch size (to reach a given performance level) must be larger than in the quantized case. This is indeed what would be expected from our theoretical results, as explained at the end of Sec.~\ref{sec:main-quantized}.

From Fig.~\ref{fig:synth_0_gmm}, we can observe that quantized or modulo ACL is also applicable to the task of GMM. As suggested by Cor.~\ref{cor:final}, the required sketch size (to reach equivalent performances) increases more for this task than for k-means.


\subsection*{Experiment 2: synthetic data, varying dataset size}
Next, we study the role of the dataset size in the ACL scheme (focusing on quantized ACL). Unless explicitly mentioned below, all parameters are identical to the previous experiment.
We first generate a ``full-size'' dataset $\wt{\ds}$ (with size $\wt{n} = 10^7$) from the previous GMM $\cl P_0$. For each trial, we use a smaller dataset $\ds$ for compressive learning obtained by picking uniformly at random a subset of $n$ samples (without replacement) in $\wt{\ds}$. Here, the empirical excess risk $\Delta \risk$ is evaluated using the full dataset $\wt{\ds}$ (the ERM minimizer $\solerm$ being also learned on this full dataset).

\begin{figure}
	\centering
	
	\subfloat[k-means]{
		\includegraphics[height=135px]{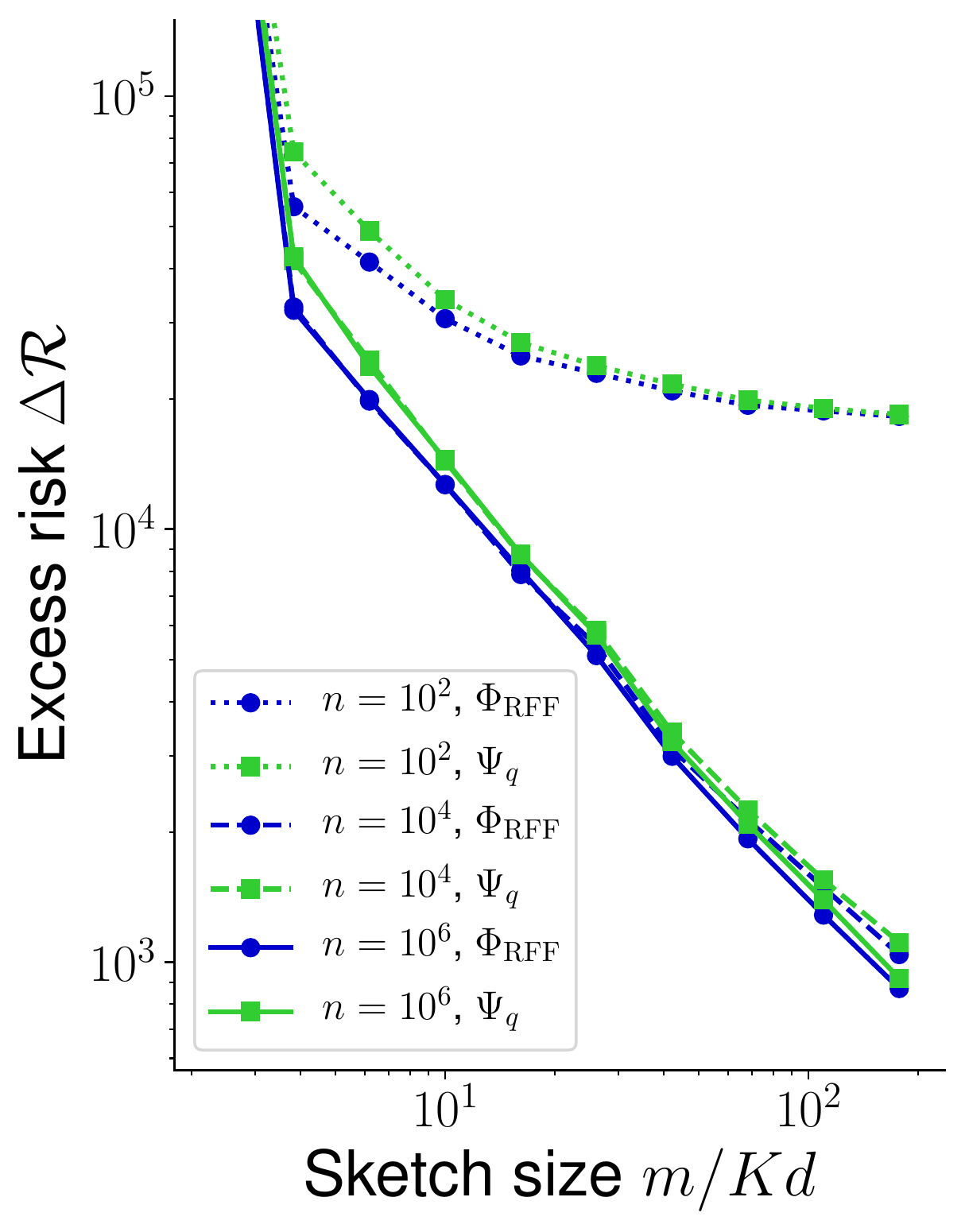}
		\label{fig:synth_1_kmeans}}
	\hspace{4px}
	\subfloat[GMM]{
		\includegraphics[height=135px]{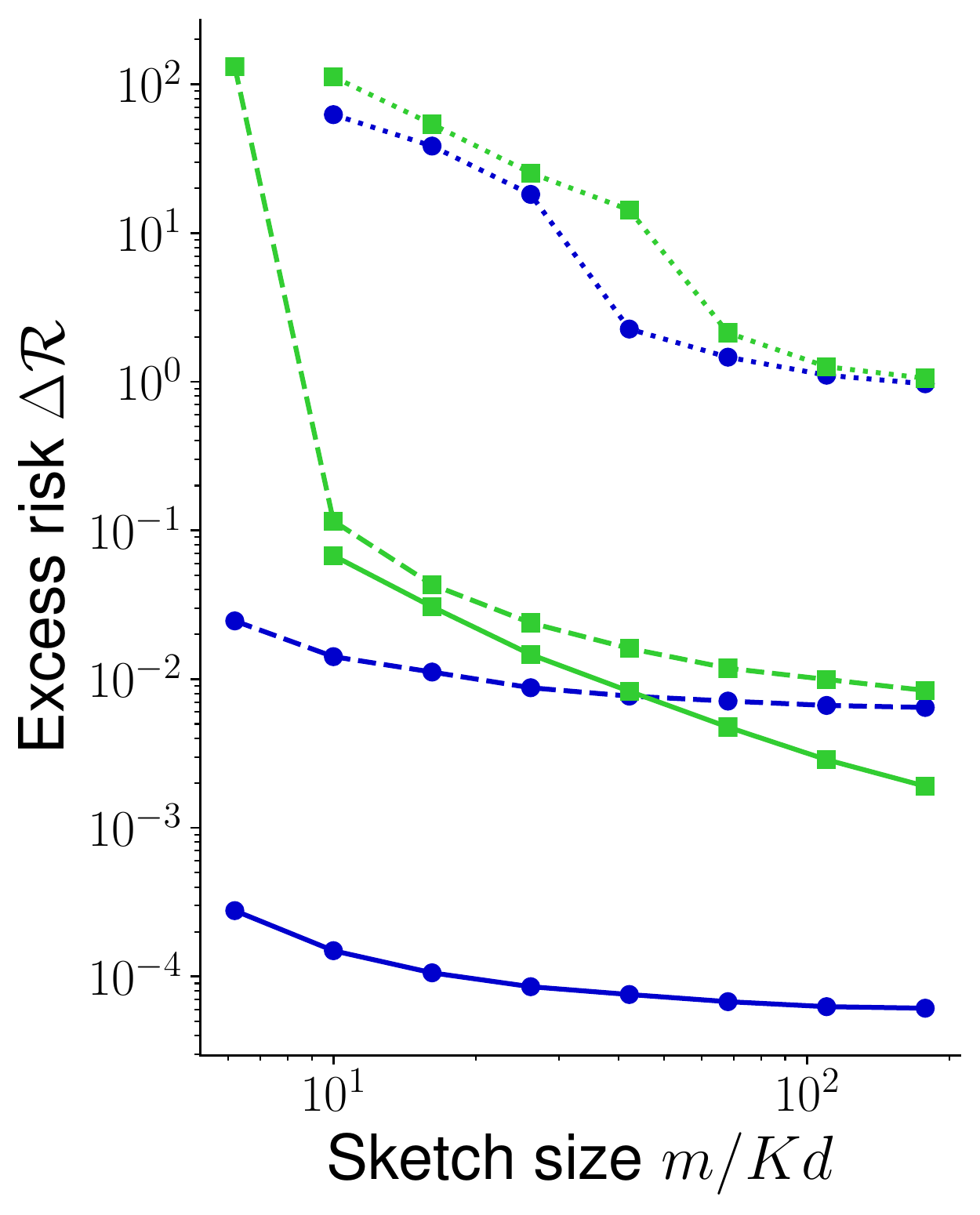}
		\label{fig:synth_1_gmm}}
	\caption{Empirical excess on a synthetic dataset of total size $\wt{n} = 10^7$ of solutions obtained through ACL where one sketches a subset of the dataset with varying sizes $n$ (dotted, dashed, plain lines for $n = 10^2, 10^4, 10^6$ respectively), using the full-precision sketch (blue circles) or the quantized sketch (green squares), as a function of the sketch size $m$. }
	\label{fig:synth_1}
\end{figure}

The results are shown Fig.~\ref{fig:synth_1}. As can be observed from Fig.~\ref{fig:synth_1_kmeans}, having a larger amount of samples $n$ improves the performances (at constant sketch size $m$), as could be expected. In terms of excess risk guarantees, this can be related to the sampling term $\lripc \| \sketchop[\Phi](\disttrue) - \sketchop[\Phi](\distemp) \|_2$ in~\eqref{eq:excessrisk_asym}. When $m$ increases, the excess risk quickly saturates on smaller datasets, but in the two larger-size datasets ($n = 10^4$ and $10^6$) this is not the case. An interesting phenomenon can be observed by looking (very) closely at those last two curves: there is a ``crossing'' between $m/Kd = 10$ and $m/Kd = 100$. When the sketch size is small, the curves are grouped by sketching feature map $\Psi$ (\ie by color); the dominant effect on the excess risk is whether the quantized sketch is used or not, regardless of the dataset size, which in our theoretical results can be associated with the LPD error term $\epsilon$. But as the sketch size increases, the curves are grouped by dataset size instead (\ie the plain and dashed curves go together); the dominant effect is now the sampling error. This can be explained by our theory through the fact that the LPD error $\epsilon$ decreases with\footnote{Note that the LRIP constant $\lripc$ also should decrease with $m$, but this same constant appears in all the terms of the excess risk, and this effect should thus impact all the curves in the same way.} $m$. Similar conclusions can be drawn for GMM modeling (Fig.~\ref{fig:synth_1_gmm}), albeit with a more significant impact related to the dataset size, which makes the crossing described before easier to observe.


\subsection*{Experiment 3: real data, audio classification task}
As large-scale proof-of-concept, we tackle an audio event classification task, where (A)CL is used to alleviate the computational cost of learning a GMM in a feature extraction phase. Note that our goal is not to propose a particularly competitive audio classification scheme, but to compare the ACL strategy to symmetric CL on large-scale, realistic data.

Our scheme follows the ``alpha features'' strategy described in~\cite{kumar2016features}. We use the ESC-50 dataset~\cite{piczak2015esc}, which contains $J = 2500$ audio clips lasting $5s$, each associated to one of $C = 50$ classes (\eg animals, water sounds, urban noises). We assume that the $J$ audio clips $\bs s^{(j)}$ are distributed across a sensor network, which perform local preprocessing as follows. For each audio clip $\bs s^{(j)}$, we extract\footnote{The MFCC extraction used the \texttt{librosa}~\cite{librosa} package. The subsequent SVM model is trained with \texttt{scikit-learn}~\cite{scikit-learn}.} Mel Frequency Cepstral Coefficients (MFCC), using $d=10$ frequency bands, $30$\,ms-long time intervals, and $15$\,ms-long hops. We then take the (distorted) features $\Psi(\bs x^{(j)}_i)$ (with $\Psi$ to be specified) of each of the $N = \lceil\frac{5\,{\rm s}}{15\,{\rm ms}}\rceil = 334$ resulting MFCC vectors $\bs x^{(j)}_i \in \Rbb^d$. Gathering features from the $J$ audio clips, the  $n = JN$ resulting contributions $\Psi(\bs x^{(j)}_i)$, each encoded by $b$ bits (more on this below), are aggregated, by a central server, into one sketch vector $\bs z_{\ds,\Psi}$. A GMM of $K = 32$ modes, describing the distribution of all clips in the frequency domain, is then extracted from this sketch by (A)CL. Each audioclip $\bs s^{(j)}$ can then by summarized by its ``alpha features'' $\bs \alpha^{(j)} \in \bb R^K$, defined by the average soft assignement of that clip's MFCC vectors to each of the $K$ Gaussian modes, \ie
$$\alpha^{(j)}_k :=  \ts \frac{1}{N} \sum_{i=1}^N  \frac{w_k p_{\cl N}(\bs x_i^{(j)}; \bs \mu_k, \bs \Gamma_k)}{\sum_{k=1}^K w_k p_{\cl N}(\bs x_i^{(j)}; \bs \mu_k, \bs \Gamma_k)}.$$
Finally, a SVM is learned on the alpha features to classify the $J$ audio samples; see~\cite{kumar2016features} for additional details.

We train this classification scheme and evaluate it on a separate test set ($20\%$ of the full dataset), for various values of the sketch dimension $m$. Assuming a scenario where minimizing the transmission cost is crucial, we compare the performances at given values of the number of bits $b$ sent per "message", \ie per featurized MFCC vector $\Psi(\bs x_i)$. For usual full-precision RFF $\Psi = \Phi_{\RFF}$, we assume the real and imaginary numbers are encoded by $64$ bits, \ie $b = 128m$. For the quantized RFF $\Psi = \Psi_{q}$ we have, by construction, $b = 2m$. As baseline, we also report the accuracy when Expectation-Maximization (EM) is performed on the whole dataset to learn the GMM (``un-compressed learning'').

\begin{figure}
	\centering
	\includegraphics[width=0.8\linewidth]{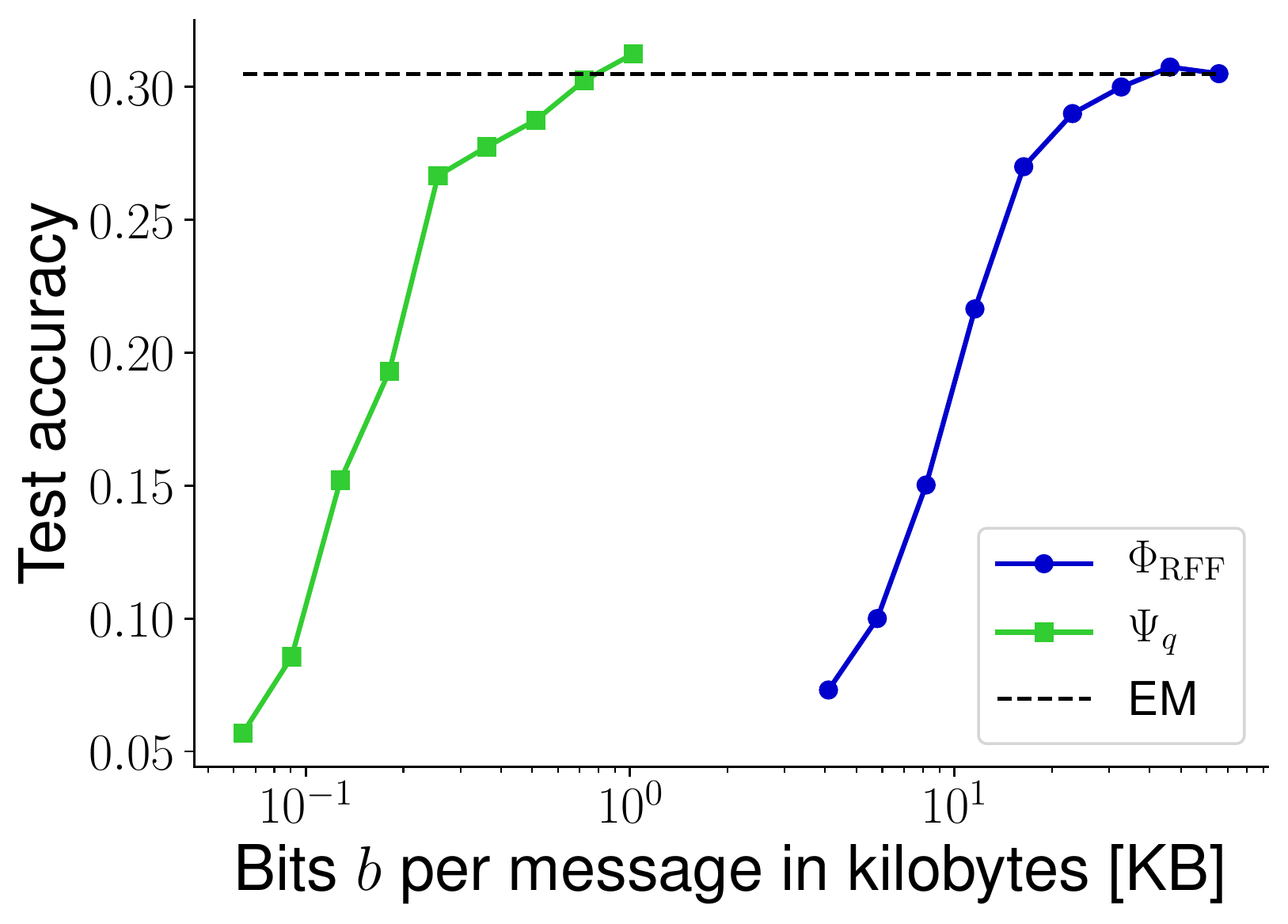}
	\caption{Test accuracy of the GMM-based audio classification procedure described in~\cite{kumar2016features}, as versus number of bits per contribution, for full-precision (blue) or quantized (green) sketch contribution, and with plain Expectation-Maximization (dashed black) on the ESC-50 dataset. }
	\label{fig:real_1}
\end{figure}

The results are shown Fig.~\ref{fig:real_1}. In this specific scenario, the ACL scheme with quantized contributions is particularly advantageous over the full-precision symmetric CL scheme, as \emph{the same performances can be reached with a bitrate reduction by a factor of at least $30$}. Moreover, when the sketch size is large enough, both compressive learning approaches are competitive with the EM baseline, which requires several passes over the entire database $\ds$. Of course, this doesn't mean CL is necessarily the best candidate for the scenario described here, as other approaches using \eg distributed learning could be considered; recall that we focus here on the comparison between symmetric and asymmetric CL.

\section{Conclusion}
\label{sec:conclusion}

As a first main contribution, we defined the asymmetric compressive learning (ACL) scheme and formally established excess risk bounds for it. This was achieved by introducing a specific LPD property---telling us ``how far'' a distorted feature map is from an undistorted one---that combines with the classical LRIP of compressive learning to explain the ACL performances. Our second key contribution was to apply this result (\ie proving the LPD property) to the specific case of quantized (and modulo) sketch contributions. We then further validated those results with numerical simulations.

However, our contribution focused on deriving an excess risk bound without particular care for its tightness. In particular, using Lemma~\ref{prop:easyGeneralLPD} to prove the LPD requires to have a sketch size $m$ scaling with the complexity (Kolmogorov entropy) of a signal set $\Sigma$, which is not a required ingredient in previous (symmetric) CL guarantees~\cite{gribonval2017compressiveStatisticalLearning}; this could be suboptimal if $\Sigma$ is large (in the Kolmogorov entropy sense); ideally, our results should depend on the complexity of the model set $\modelset$ rather than the signal space $\Sigma$.

Moreover, just as the existing CL guarantees, the excess risk bound is not readily exploitable in practice. For instance, the LRIP constant and the Kolmogorov entropy are hard to pin down accurately, and involve solving non-trivial trade-offs to be interpreted properly (\eg between the sketch size $m$, the probability of failures, the error contributions $\epsilon$). As a last caveat, let us recall that the current compressive learning algorithms used in practice are heuristics that do not have convergence guarantees. Future work is thus needed to bridge the gap between the theoretical guarantees and empirical performances of compressive learning---symmetric or not.

\appendices

\section{Quantized and modulo sketch constants}
\label{sec: comp-mod-q-constant}

We compute here the mean Lipschitz constant $L^{\mu}_{f}$, and the constants $C_f$ and $c_f$ defined in Prop.~\ref{prop:slPDforRPF},  when $f(\cdot)$ is either the universal quantization operation $q(\cdot)$ defined in (\ref{eq:unfi-quant-def}), or the (complex) modulo $\fmod(\cdot)$ in (\ref{eq:mod-f-def}).

In the case of $q$, its first FS coefficient is $Q_1 = \frac{4}{\pi}$, $\|q\|_{\infty} = \sqrt{2}$, and $L^{\mu}_{q} = \frac{8}{\pi}$ from~\cite[Prop.~6]{schellekens2020breaking}. This gives $C_q = (1+ \|q\|_\infty/|Q_1|) = 1+\frac{\pi}{2\sqrt{2}}$ and $c_q = 24C_{\Lambda}$.

Regarding the modulo function $\fmod$, since $\fmod_{2\pi}(t) = \sum_{k\neq0} \frac{\im}{\pi k} e^{\im k t}$, its first FS coefficient is $M_{1} = \frac{2\im}{\pi}$, and $\|\fmod\|_{\infty} = (1 + (\frac{1}{2})^2)^{1/2} = (\frac{5}{4})^{1/2}$, so that  $C_{\smod} = 1 + \sqrt{5}\pi/4$. Moreover, we have $L^{\mu}_{\smod} = \frac{4+\sqrt{2}}{\pi}$ since the integral in~(\ref{eq:meanSmooth}), \ie $ I_{\delta} := \int_{0}^{2\pi} \sup_{|r| \leq \delta} |\fmod(t+r)-\fmod(t)| \ud t$, 
can be upper bounded as (for $\delta \leq \frac{\pi}{4}$ the equality is reached)
\begin{equation*}
	\ts I_{\delta} \leq (2\pi - 4 \delta)\cdot\frac{\delta \sqrt{2}}{\pi} + 2 \cdot 2 \delta \cdot \frac{1}{2}(2 + 2 \sqrt{(1-\frac{\delta}{2\pi})^2 + \frac{\delta^2}{4\pi^2}}).
\end{equation*}
Therefore, since $(1-\frac{\delta}{2\pi})^2 + \frac{\delta^2}{4\pi^2} \leq 1$ (with equality if $\delta \in \{0,2\pi\}$), $\frac{I_{\delta}}{\delta} \leq g(\delta) := 8 - 6 \sqrt{2} + 8 \sqrt{2} (1 - \frac{\delta}{2\pi})$. This function $g$ reaches its maximum in $\delta=0$, where the equality holds, which means that $L^{\mu}_{\smod} = \sup_{0<\delta\leq\pi} \frac{I_{\delta}}{2 \pi \delta} = \frac{4+\sqrt{2}}{\pi}$, so that  $c_{\smod} = (24+2\sqrt{2})C_{\Lambda} > c_{q}$.

\newpage
\bibliographystyle{unsrt} 

\end{document}